\providecommand{\tabularnewline}{\\}
\theoremstyle{plain}
\newtheorem{thm}{\protect\theoremname}
\theoremstyle{remark}
\newtheorem{rem}[thm]{\protect\remarkname}
\theoremstyle{plain}
\newtheorem{assumption}[thm]{\protect\assumptionname}
\theoremstyle{plain}
\newtheorem{lem}[thm]{\protect\lemmaname}
\title{ASGNN: Graph Neural Networks with \\ Adaptive Structure}
\author{%
  Zepeng Zhang\textsuperscript{1}, Songtao Lu\textsuperscript{2}, Zengfeng Huang\textsuperscript{3}, Ziping Zhao\textsuperscript{1}\\
  \textsuperscript{1}ShanghaiTech University, \textsuperscript{2}IBM Research, \textsuperscript{3}Fudan University\\
  \texttt{zhangzp1@shanghaitech.edu.cn,songtao@ibm.com,huangzf@fudan.edu.cn,}\\
\texttt{zhaoziping@shanghaitech.edu.cn}
}
\providecommand{\remarkname}{Remark}
\providecommand{\theoremname}{Theorem}
\providecommand{\assumptionname}{Assumption}
\providecommand{\lemmaname}{Lemma}
\providecommand{\remarkname}{Remark}
\providecommand{\theoremname}{Theorem}
\begin{document}
\maketitle 
\begin{abstract}
The graph neural network (GNN) models have presented impressive achievements
in numerous machine learning tasks. However, many existing GNN models
are shown to be vulnerable to adversarial attacks, which creates a
stringent need to build robust GNN architectures. In this work, we
propose a novel interpretable message passing scheme with adaptive
structure (ASMP) to defend against adversarial attacks on graph structure.
Layers in ASMP are derived based on optimization steps that minimize
an objective function that learns the node feature and the graph structure
simultaneously. ASMP is adaptive in the sense that the message passing
process in different layers is able to be carried out over dynamically
adjusted graphs. Such property allows more fine-grained handling of
the noisy (or perturbed) graph structure and hence improves the robustness.
Convergence properties of the ASMP scheme are theoretically established.
Integrating ASMP with neural networks can lead to a new family of
GNN models with adaptive structure (ASGNN). Extensive experiments
on semi-supervised node classification tasks demonstrate that the
proposed ASGNN outperforms the state-of-the-art GNN architectures
in terms of classification performance under various adversarial attacks. 
\end{abstract}

\section{Introduction}

Graphs, or networks, are ubiquitous data structures in many fields
of science and engineering \citep{newman2018networks}, like molecular
biology, computer vision, social science, financial technology, etc.
In the past few years, due to its appealing capability of learning
representations through message passing over the graph structure,
graph neural network (GNN) models have become popular choices for
processing graph-structured data and have achieved astonishing success
in various applications \citep{kipf2017semi,bronstein2017geometric,wu2020comprehensive,zhou2020graph,GNNBook2022}.
However, existing GNN backbones such as the graph convolutional network
(GCN) \citep{kipf2017semi} and the graph attention network \citep{velivckovic2018graph}
are shown to be extremely vulnerable to carefully designed adversarial
attacks on the graph structure \citep{sun2018adversarial,jin2021adversarial,GNNBook-ch8-gunnemann}.
With unnoticeable malicious manipulations of the graph, the performance
of GNNs significantly drops and may even be worse than the performance
of a simple baseline that ignores all the relational information among
data feature \citep{dai2018adversarial,zugner2018adversarial,zugner2019adversarial,zhang2020gnnguard}.
With the increasing deployments of GNN models in various real-world
applications, it is of vital importance to ensure their reliability
and robustness, especially in scenarios, such as medical diagnosis
and credit scoring, where a deflected model can lead to dramatic consequences
\citep{GNNBook-ch8-gunnemann}.

To improve the robustness of GNNs with a potentially noisy graph structure
input, a natural idea is to ``purify'' the given graph structure.
Existing work in this line can be roughly classified into two categories.
The first category of robustifying GNNs can be viewed as a two-stage
approach. A purified graph is firstly obtained by ``pre-processing''
the input graph structure leveraging on information from the node
feature. Next, a GNN model is trained based on this purified graph.
For example, in the GNN-Jaccard method \citep{wu2019adversarial},
a new graph is obtained by removing the edges with small ``Jaccard
similarity.'' In \citet{entezari2020all}, observing that adversarial
attacks can scale up the rank of the graph adjacency matrix, the authors
propose to use a low-rank approximation version of the given graph
adjacency matrix as a substitute. In the second category, the graph
adjacency matrix in a GNN model is treated as an unknown, a purified
graph structure with a parameterized form will be ``learned'' through
optimizing the supervised GNN training loss \citep{zhu2021deep}.
For example, in \citet{franceschi2019learning}, the graph adjacency
matrix is directly learned with a GNN in a bilevel optimization way,
where a full parametrization of the graph adjacency matrix is adopted.
Moreover, under this full parametrization setting, structural regularizers
are adopted in \citet{jin2020graph,luo2021learning} as augmentations
on the training loss function to promote certain properties of the
purified graph. Besides the full parametrization approach, a multi-head
weighted cosine similarity metric function \citep{chen2020iterative}
and a GNN model \citep{yu2020graph} have also been used to parameterize
the graph adjacency matrix for structure learning.

Going beyond purifying the graph structures to robustify the GNN models,
there are also efforts on designing robust GNN architectures via directly
designing the feature aggregation schemes. Under the observation that
aggregation functions such as sum, weighted mean, or the max operations
can be arbitrarily distorted by only a single outlier node, \citet{geisler2020reliable,wang2020provably,zhang2020feature}
try to design robust GNN models via designing robust aggregation functions.
Moreover, some works apply the attention mechanism \citep{velivckovic2018graph}
to mitigate the influence of adversarial perturbations. For example,
\citet{zhu2019robust} consider the node feature following a Gaussian
distribution and use the variance information to determine the attention
scores. \citet{tang2020transferring} use clean graph information
and their adversarial counterparts to train an attention mechanism
to learn to assign small attention scores to the perturbed edges.
In \citet{zhang2020gnnguard}, the authors define an attention mechanism
based on the similarity of neighboring nodes. 

Different from existing approaches to robustify GNNs, in this work,
we propose a novel robust and interpretable message passing scheme
with adaptive structure (ASMP). Based on ASMP, a family of GNN models
with adaptive structure (ASGNN) can be designed. Prior works have
revealed that the message passing processes in a class of GNNs are
actually (unrolled) gradient steps for solving a graph signal denoising
(GSD) problem \citep{zhu2021interpreting,ma2021unified,zhang2022towards}.
ASMP is actually generated by an alternating (proximal) gradient descent
algorithm for simultaneously denoising the graph signal and the graph
structure. Designed in such a principled way, ASMP is not only friendly
to back-propagation training but also achieves the desired structure
adaptivity with a theoretical convergence guarantee. Once trained,
ASMP can be naturally interpreted as a parameter-optimized iterative
algorithm. This work falls into the category of GNN architecture designs.
Conceptually different from the existing robustified GNNs with \textit{fixed}
graph structure, ASGNN interweaves the graph purification process
and the message passing process, which makes it possible to conduct
message passing over different graph structures at different layers,
i.e., in an \textit{adaptive} graph structure fashion. Thus, an edge
might be excluded in some layers but included in other layers, depending
on the dynamic structure learning process. Such property allows more
fine-grained handling of perturbations than existing graph purification
methods that use a single graph in the entire GNN. To be more specific,
the major contributions of this work are highlighted in the following.
\begin{itemize}
\item We propose a novel message passing scheme over graphs called ASMP
with convergence guarantee and specifications. To the best of our
knowledge, ASMP is the first message passing scheme with adaptive
structure that is designed based on an optimization problem. 
\item Based on ASMP, a family of GNN models with adaptive structure, named
ASGNN, are further introduced. The adaptive structure in ASGNN allows
more fine-grained handling of noisy graph structures and strengthens
the model robustness against adversarial attacks. 
\item Extensive experiments under various adversarial attack scenarios showcase
the superiority of the proposed ASGNN. The numerical results corroborate
that the adaptive structure property inherited in ASGNN can help mitigate
the impact of perturbed graph structure. 
\end{itemize}

\section{Preliminaries and Background}

An unweighted graph with self-loops is denoted as $\mathcal{G}=(\mathcal{V},\mathcal{E})$,
where $\mathcal{V}$ and $\mathcal{E}$ denote the node set and the
edge set, respectively. The graph adjacency matrix is given by $\mathbf{A}\in\mathbb{R}^{N\times N}$.
We denote by $\mathbf{1}$ and $\mathbf{I}$ the all-one column vector
and the identity matrix, respectively. Given $\mathbf{D}=\mathrm{Diag}\left(\mathbf{A}\mathbf{1}\right)\in\mathbb{R}^{N\times N}$
as the diagonal degree matrix, the Laplacian matrix is defined as
$\mathbf{L}=\mathbf{D}-\mathbf{A}$. We denote by $\mathbf{A}_{{\rm rw}}=\mathbf{D}^{-1}\mathbf{A}$
the random walk (or row-wise) normalized adjacency matrix and by $\mathbf{A}_{{\rm sym}}=\mathbf{D}^{-\frac{1}{2}}\mathbf{A}\mathbf{D}^{-\frac{1}{2}}$
the symmetric normalized adjacency matrix. Subsequently, the random
walk normalized and symmetric normalized Laplacian matrices are defined
as $\mathbf{L}_{{\rm rw}}=\mathbf{I}-\mathbf{D}^{-1}\mathbf{A}$ and
$\mathbf{L}_{{\rm sym}}=\mathbf{I}-\mathbf{D}^{-\frac{1}{2}}\mathbf{A}\mathbf{D}^{-\frac{1}{2}}$,
respectively. $\mathbf{X}\in\mathbb{R}^{N\times M}$ ($M$ is assumed
to be the dimension of the node feature) is a node feature matrix
or a graph signal, and its $i$-th row $\mathbf{X}_{i,:}$ represents
the feature vector at the $i$-th node with $i=1,\ldots,N$. $\mathbf{X}_{ij}$
(or $\left[\mathbf{X}\right]_{ij}$) denotes the $(i,j)$-th element
of $\mathbf{X}$ with $i,j=1,\ldots,N$. For vector $\mathbf{X}_{i,:}$,
$\mathbf{X}_{i,:}^{-1}$ represents its element-wise inverse.

\subsection{GNNs as Graph Signal Denoising \label{subsec:GNNs-as-GSD}}

In the literature \citep{yang2021graph,panunified,zhu2021interpreting},
it has been realized that the message passing layers for feature learning
in many GNN models could be uniformly interpreted as gradient steps
for minimizing certain energy functions, which carries a meaning of
GSD \citep{ma2021unified}. Recently, \citet{zhang2022towards} further
showed that some popular GNNs are neural networks induced from unrolling
(proximal) gradient descent algorithms for solving specific GSD problems.
Taking the approximate personalized propagation of neural predictions
(APPNP) model \citep{klicpera2019combining} as an example, the initial
node feature matrix $\mathbf{Z}$ is first pre-propcessed by a multilayer
perceptron $g_{\theta}(\cdot)$ with model parameter $\boldsymbol{\theta}$
producing an output $\mathbf{X}=g_{\theta}(\mathbf{Z})$, and then
$\mathbf{X}$ is fed into a $K$-layer message passing scheme given
as follows: 
\begin{equation}
\mathbf{H}^{(0)}=\mathbf{X},\ \ \ \ \mathbf{H}^{(k+1)}=\left(1-\alpha\right)\mathbf{A}_{{\rm sym}}\mathbf{H}^{(k)}+\alpha\mathbf{X},\ \ \text{for}\ k=0,\ldots,K-1,\label{eq:APPNP}
\end{equation}
where $\mathbf{H}^{(0)}$ denotes the input feature of the message
passing process, $\mathbf{H}^{(k)}$ represents the learned feature
after the $k$-th layer, and $\alpha$ is the teleport probability.
Therefore, the message passing of an APPNP model is fully specified
by two parameters, namely, a graph structure matrix $\mathbf{A}_{{\rm sym}}$
and a parameter $\alpha$, in which $\mathbf{A}_{{\rm sym}}$ assumes
to be known beforehand and $\alpha$ is treated as a hyperparameter.

From an optimization perspective, the message passing process in Eq.
\eqref{eq:APPNP} can be seen as executing $K$ steps of gradient
descent to solve a GSD problem with initialization $\mathbf{H}^{(0)}=\mathbf{X}$
and step size $0.5$ \citep{zhu2021interpreting,ma2021unified,zhang2022towards},
which is given by
\begin{equation}
\begin{aligned} & \underset{\mathbf{H}\in\mathbb{R}^{N\times M}}{\mathsf{minimize}} &  & \alpha\left\Vert \mathbf{H}-\mathbf{X}\right\Vert _{{\rm F}}^{2}+\left(1-\alpha\right)\mathrm{Tr}\bigl(\mathbf{H}^{\top}\mathbf{L}_{{\rm sym}}\mathbf{H}\bigr),\end{aligned}
\label{eq:GSD}
\end{equation}
where $\mathbf{X}$ and $\alpha$ are given and share the same meaning
as in Eq. \eqref{eq:APPNP}. In Problem \eqref{eq:GSD}, the first
term is a fidelity term forcing the recovered graph signal $\mathbf{H}$
to be as close as possible to a noisy graph signal $\mathbf{X}$,
and the second term is the symmetric normalized Laplacian smoothing
term measuring the variation of the graph signal $\mathbf{H}$, which
can be explicitly expressed as 
\begin{equation}
\mathrm{Tr}\left(\mathbf{H}^{\top}\mathbf{L}_{{\rm sym}}\mathbf{H}\right)=\frac{1}{2}\sum_{i=1}^{N}\sum_{j=1}^{N}\mathbf{A}_{ij}\left\Vert \frac{\mathbf{H}_{i,:}}{\sqrt{\mathbf{D}_{ii}}}-\frac{\mathbf{H}_{j,:}}{\sqrt{\mathbf{D}_{jj}}}\right\Vert _{2}^{2}.\label{eq:Laplacian}
\end{equation}
For more technical discussions on relationships between GNNs with
iterative optimization algorithms for solving GSD problems, please
refer to \citet{ma2021unified,zhang2022towards}. Apart from using
the lens of optimization to interpret existing GNN models, there are
also literature \citep{liu2021elastic,chen2021graph,fu2022p} working
on building new GNN architectures based on designing novel optimization
problems and the corresponding iterative algorithms (more discussions
are provided in Appendix \ref{Appendix:Related-Work}).

\subsection{Graph Learning with Structural Regularizers \label{subsec:GSL-Regularizer}}

Structural regularizers are commonly adopted to promote certain desirable
properties when learning a graph \citep{kalofolias2016learn,pu2021learning}.
In the following, we discuss several widely used graph structural
regularizers which will be incorporated into the design of ASMP. We
denote the learnable graph adjacency matrix as $\mathbf{S}$ satisfying
$\mathbf{S}\in\mathcal{S}$, where 
\[
\mathcal{S}=\left\{ \mathbf{S}\in\mathbb{R}^{N\times N}\mid0\leq\mathbf{S}_{ij}\leq1,\ \text{for}\ i,j=1,\ldots,N\right\} 
\]
defines the class of adjacency matrices. Under the assumption that
node feature changes smoothly between adjacent nodes \citep{ortega2018graph},
the Laplacian smoothing regularization term is commonly considered
in graph structure learning. Eq. \eqref{eq:Laplacian} is the symmetric
normalized Laplacian smoothing term, and a random walk normalized
alternative can be similarly defined by replacing $\mathbf{L}_{{\rm sym}}$
in Eq. \eqref{eq:Laplacian} by $\mathbf{L}_{{\rm rw}}$.

Real-world graphs are normally sparsely connected, which can be represented
by sparse adjacency matrices. Moreover, it is also observed that singular
values of these adjacency matrices are commonly small \citep{zhou2013learning,kumar2020unified}.
However, a noisy adjacency matrix (e.g., one perturbed by adversarial
attacks) tends to be dense and to gain singular values in larger magnitudes
\citep{jin2020graph}. In view of this, graph structural regularizers
for promoting sparsity and/or suppressing the singular values are
widely adopted in the literature of graph learning \citep{kalofolias2016learn,egilmez2017graph,dong2019learning}.
Specifically, the $\ell_{1}$-norm of the adjacency matrix is often
used to promote sparsity, defined as $\left\Vert \mathbf{S}\right\Vert _{1}=\sum_{i,j=1}^{N}\left|\mathbf{S}_{ij}\right|.$
For penalizing the singular values, the $\ell_{1}$-norm and the $\ell_{2}$-norm
on the singular value vector of the adjacency matrix $\mathbf{S}$
can help. Equivalently, they can be translated to be the nuclear norm
and the Frobenius norm on $\mathbf{S}$, which are given by $\left\Vert \mathbf{S}\right\Vert _{*}=\sum_{i=1}^{N}\sigma_{i}\left(\mathbf{S}\right)$
and $\left\Vert \mathbf{S}\right\Vert _{{\rm F}}=\sqrt{\sum_{i=1}^{N}\sigma_{i}^{2}\left(\mathbf{S}\right)}$,
respectively, where $\sigma_{1}\left(\mathbf{S}\right)\geq\cdots\geq\sigma_{N}\left(\mathbf{S}\right)$
denote the ordered singular values of $\mathbf{S}$. These two regularizers
both restrict the scale of the singular values while the nuclear norm
also promotes low-rankness. A recent study \citep{deng2022garnet}
points out that graph learning methods with low-rank promoting regularizers
may lose a wide range of spectrum of the clean graph corresponding
to important structure in the spatial domain. Thus, the nuclear norm
regularizer may impair the quality of the reconstructed graph and
therefore limit the performance of GNNs. Besides, the nuclear norm
is not amicable for back-propagation and incurs high computational
complexity \citep{luo2021learning}. Arguably, the Frobenius norm
of $\mathbf{S}$ is a more suitable regularizer for graph structure
learning in comparison with the nuclear norm.

\section{The Proposed Graph Neural Networks}

In this section, we first motivate the design principle based on jointly
node feature learning and graph structure learning. Then, we develop
an efficient optimization algorithm for solving this optimization
problem, which eventually leads to a novel message passing scheme
with adaptive structure (ASMP). After that, we provide interpretations,
convergence guarantees, and specifications of ASMP. Finally, integrating
ASMP with deep neural networks ends up with a new family of GNNs with
adaptive structure, named ASGNNs.

\subsection{A Novel Design Principle with Adaptive Graph Structure}

As discussed in Section \ref{subsec:GNNs-as-GSD}, the message passing
procedure in many popular GNNs can be viewed as performing graph signal
denoising (or node feature learning) \citep{zhu2021interpreting,ma2021unified,panunified,zhang2022towards}
over a prefixed graph. 
Unfortunately, if some edges in the graph are task-irrelevant or even
maliciously manipulated, the node feature learned may not be appropriate
for the downstream tasks. Motivated by this, we propose a new design
principle for message passing, that is, to learn the node feature
and the graph structure simultaneously. It enables learning an adaptive
graph structure from the feature for the message passing procedure.
Hence, such a message passing scheme can potentially improve robustness
against noisy input graph structure.

Specifically, we construct an optimization objective by augmenting
the GSD objective in Eq. \eqref{eq:GSD} (we have used a random walk
normalized graph Laplacian smoothing term) with a structural fidelity
term $\|\mathbf{S}-\mathbf{A}\|_{{\rm F}}^{2}$, where $\mathbf{A}$
is the given initial graph adjacency matrix, and the structural regularizers
$\left\Vert \mathbf{S}\right\Vert _{1}$ and $\left\Vert \mathbf{S}\right\Vert _{{\rm F}}^{2}$.
Then we obtain the following optimization problem:\begin{equation} \underset{\mathbf{H}\in\mathbb{R}^{N\times M},\;\mathbf{S}\in\mathcal{S}}{\mathsf{minimize}}\ \ \ p\left(\mathbf{H},\mathbf{S}\right)=\rlap{$\overbrace{\phantom{\|\mathbf{H}-\mathbf{X}\|_{\rm F}^{2}+\lambda\mathrm{Tr}\left(\mathbf{H}^{\top}\mathbf{L}_{\rm rw}\mathbf{H}\right)}}^{\text{feature learning}}$}\|\mathbf{H}-\mathbf{X}\|_{\rm F}^{2}+\underbrace{\lambda\mathrm{Tr}\left(\mathbf{H}^{\top}\mathbf{L}_{\rm rw}\mathbf{H}\right)+\gamma\|\mathbf{S}-\mathbf{A}\|_{\rm F}^{2}+\mu_1\left\Vert \mathbf{S}\right\Vert _{1}+\mu_2\left\Vert \mathbf{S}\right\Vert _{\rm F}^{2}}_{\text{structure learning}},\label{eq:Structure Signal Denoising} \end{equation}

where $\mathbf{H}$ is the feature variable, $\mathbf{S}$ is the
structure variable, and $\gamma$, $\lambda$, $\mu_{1}$, and $\mu_{2}$
are parameters balancing different terms. To enable the interplay
between feature learning and structure learning, the Laplacian smoothing
term is concerned with $\mathbf{S}$ rather than $\mathbf{A}$, i.e.,
$\mathbf{L}_{{\rm rw}}=\mathbf{I}-\mathbf{D}^{-1}\mathbf{S}$ with
$\mathbf{D}=\mathrm{Diag}\left(\mathbf{S}\mathbf{1}\right)$. When
adversarial attacks exist, a perturbed adjacency matrix $\mathbf{A}$
will be generated. Since attacks are generally designed to be unnoticeable
\citep{jin2021adversarial}, the perturbed graph adjacency matrix
is largely similar to the original graph matrix in value. In view
of this, we also include a structural fidelity term $\|\mathbf{S}-\mathbf{A}\|_{{\rm F}}^{2}$.
The motivation for introducing the last two regularizers has been
elaborated in Section \ref{subsec:GSL-Regularizer}.

\subsection{ASMP: Message Passing with Adaptive Structure}

\begin{figure}
\centering{}\includegraphics[width=1\columnwidth]{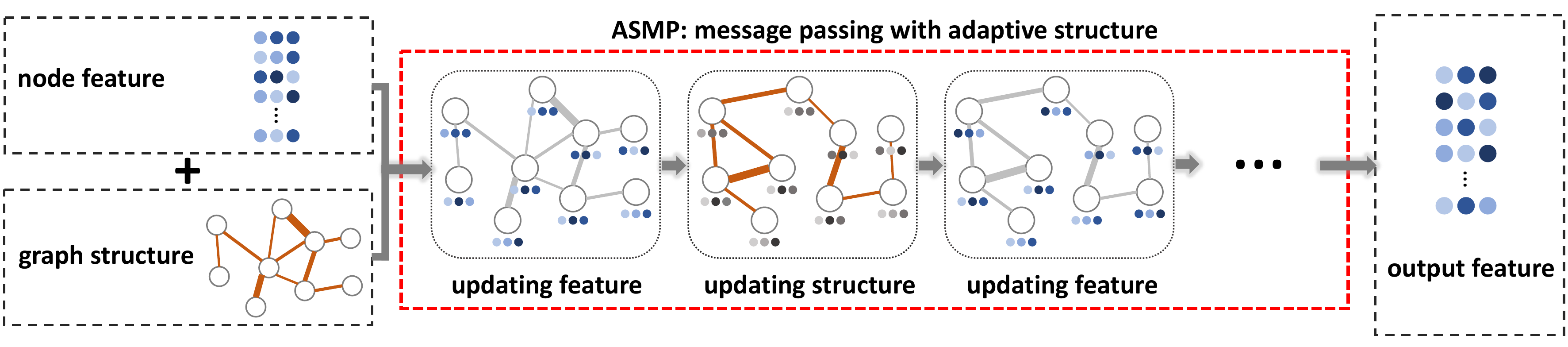}\caption{Illustration of ASMP. (ASMP takes the node feature matrix $\mathbf{X}$
and graph structure matrix $\mathbf{A}$ as input. Different colors
of the feature indicate different embedding values. Different width
of the edges indicate different weight values. ASMP updates the node
feature and the graph structure in an alternating way.) \label{fig:ASMP}}
\end{figure}

Following the idea that the message passing of a GNN model can be
derived based on the optimization of a GSD objective function \citep{ma2021unified,zhang2022towards},
we can obtain a message passing scheme from Problem \eqref{eq:Structure Signal Denoising}.
Different from the existing GSD problems for GNN model design with
only the feature variable, Problem \eqref{eq:Structure Signal Denoising}
is nonconvex and much more challenging. To obtain an efficient iterative
algorithm that is friendly to back-propagation training, we propose
to use the alternating (proximal) gradient descent method \citep{parikh2014proximal},
i.e., alternatingly optimizing one variable by taking one (proximal)
gradient step at a time with the other variable fixed. (Note that
a joint optimization approach is also eligible, while it would lead
to slower convergence than the alternating optimization approach.
More details can be found in Appendix \ref{sec:Discussion-on-joint}.)

We denote by $\mathbf{H}^{(k)}$ and $\mathbf{S}^{(k)}$ the variables
at the $k$-th iteration ($k=0,\ldots,K$). In the following, the
update rules for $\mathbf{H}$ and $\mathbf{S}$ will be discussed,
respectively.

\textbf{Updating node feature matrix $\mathbf{H}$}: Given $\{\mathbf{H}^{(k)},\mathbf{S}^{(k)}\}$,
the subproblem with respect to feature matrix $\mathbf{H}$ is given
by 
\begin{equation}
\underset{\mathbf{H}\in\mathbb{R}^{N\times M}}{\mathsf{minimize}}\ \ \ \|\mathbf{H}-\mathbf{X}\|_{{\rm F}}^{2}+\lambda\mathrm{Tr}\Bigl(\mathbf{H}^{\top}\mathbf{L}_{{\rm rw}}^{(k)}\mathbf{H}\Bigr),\label{eq:H-objective}
\end{equation}
where $\mathbf{L}_{{\rm rw}}^{(k)}=\mathbf{I}-\mathrm{Diag}\bigl(\mathbf{S}^{(k)}\mathbf{1}\bigr)^{-1}\mathbf{S}^{(k)}$.
One gradient step for $\mathbf{H}$ is computed as 
\begin{align*}
\mathbf{H}^{(k+1)} & =\mathbf{H}^{(k)}-\eta_{1}\Bigl(2\mathbf{H}^{(k)}-2\mathbf{X}+2\lambda\mathbf{L}_{{\rm rw}}^{(k)}\mathbf{H}^{(k)}\Bigr)\\
 & =\mathbf{H}^{(k)}-\eta_{1}\left(2\mathbf{H}^{(k)}-2\mathbf{X}+2\lambda\left(\mathbf{I}-\mathrm{Diag}\bigl(\mathbf{S}^{(k)}\mathbf{1}\bigr)^{-1}\mathbf{S}^{(k)}\right)\mathbf{H}^{(k)}\right)\\
 & =\left(1-2\eta_{1}-2\eta_{1}\lambda\right)\mathbf{H}^{(k)}+2\eta_{1}\lambda\mathrm{Diag}\bigl(\mathbf{S}^{(k)}\mathbf{1}\bigr)^{-1}\mathbf{S}^{(k)}\mathbf{H}^{(k)}+2\eta_{1}\mathbf{X},
\end{align*}
where $\eta_{1}$ denotes the step size.

\textbf{Updating graph structure matrix $\mathbf{S}$}: Given $\{\mathbf{H}^{(k+1)},\mathbf{S}^{(k)}\}$
and $\mathrm{Tr}(\mathbf{H}^{(k+1)\top}\mathbf{L}_{{\rm rw}}\mathbf{H}^{(k+1)})=\mathrm{Tr}(\mathbf{H}^{(k+1)\top}\mathbf{H}^{(k+1)})-\mathrm{Tr}(\mathbf{H}^{(k+1)\top}\mathrm{Diag}(\mathbf{S}\mathbf{1})^{-1}\mathbf{S}\mathbf{H}^{(k+1)})$,
the subproblem for $\mathbf{S}$ becomes 
\begin{equation}
\underset{\mathbf{S}\in\mathcal{S}}{\mathsf{minimize}}\ \ \ \gamma\|\mathbf{S}-\mathbf{A}\|_{{\rm F}}^{2}-\lambda\mathrm{Tr}\left(\mathbf{H}^{(k+1)\top}\mathrm{Diag}\bigl(\mathbf{S}\mathbf{1}\bigr)^{-1}\mathbf{S}\mathbf{H}^{(k+1)}\right)+\mu_{1}\left\Vert \mathbf{S}\right\Vert _{1}+\mu_{2}\left\Vert \mathbf{S}\right\Vert _{{\rm F}}^{2}.\label{eq:S-objective}
\end{equation}
Due to the non-smoothness of the objective function, we apply one
step of the proximal gradient descent \citep{parikh2014proximal}
for this problem. Define 
\[
\begin{aligned}\mathbf{T}^{(k)} & =\left(2\gamma+2\mu_{2}\right)\mathbf{S}^{(k)}-2\gamma\mathbf{A}-\lambda\mathrm{Diag}\bigl(\mathbf{S}^{(k)}\mathbf{1}\bigr)^{-1}\mathbf{H}^{(k+1)}(\mathbf{H}^{(k+1)})^{\top}\\
 & \hspace{2.75cm}+\lambda\mathrm{Diag}\left(\mathrm{Diag}\bigl(\mathbf{S}^{(k)}\mathbf{1}\bigr)^{-1}\mathbf{S}^{(k)}\mathbf{H}^{(k+1)}(\mathbf{H}^{(k+1)})^{\top}\mathrm{Diag}\bigl(\mathbf{S}^{(k)}\mathbf{1}\bigr)^{-1}\right)\mathbf{1}^{\top}.
\end{aligned}
\]
One step of proximal gradient descent is given as follows (details
are given in Appendix \ref{Appendix: Derivation of ProxGD in ASMP}):
\begin{equation}
\mathbf{S}^{(k+1)}=\mathrm{prox}_{\eta_{2}\left(\mu_{1}\left\Vert \cdot\right\Vert _{1}+\mathbb{I}_{\mathcal{S}}\left(\cdot\right)\right)}\bigl(\mathbf{S}^{(k)}-\eta_{2}\mathbf{T}^{(k)}\bigr),\label{ASMP: ProxGD Step}
\end{equation}
where $\eta_{2}$ is the step size and $\mathbb{I}_{\mathcal{S}}\left(\mathbf{S}\right)$
denotes the indicator function taking value 0 if $\mathbf{S}\in\mathcal{S}$
and $+\infty$ otherwise. Moreover, the proximal operator in Eq. \eqref{ASMP: ProxGD Step}
can be computed analytically as 
\[
\mathbf{S}^{(k+1)}=\mathrm{min}\Bigl\{1,\mathrm{ReLU}\bigl(\mathbf{S}^{(k)}-\eta_{2}\mathbf{T}^{(k)}-\eta_{2}\mu_{1}\mathbf{1}\mathbf{1}^{\top}\bigr)\Bigr\},
\]
where $\mathrm{ReLU}(\mathbf{X})=\max\{\mathbf{0},\mathbf{X}\}$.
In conclusion, the overall procedure of ASMP can be summarized as
follows: 
\begin{equation}
\begin{cases}
\mathbf{H}^{(k+1)}=\left(1-2\eta_{1}-2\eta_{1}\lambda\right)\mathbf{H}^{(k)}+2\eta_{1}\lambda\mathrm{Diag}\bigl(\mathbf{S}^{(k)}\mathbf{1}\bigr)^{-1}\mathbf{S}^{(k)}\mathbf{H}^{(k)}+2\eta_{1}\mathbf{X},\\
\mathbf{S}^{(k+1)}=\mathrm{min}\left\{ 1,\mathrm{ReLU}\left(\mathbf{S}^{(k)}-\eta_{2}\mathbf{T}^{(k)}-\eta_{2}\mu_{1}\mathbf{1}\mathbf{1}^{\top}\right)\right\} ,
\end{cases}k=0,\ldots,K-1.\tag{ASMP}\label{ASMP: Procedure}
\end{equation}
The ASMP can be interpreted as the standard message passing (i.e.,
the update step of $\mathbf{H}$) with extra operations that adaptively
adjust the graph structure (i.e., the update step of $\mathbf{S}$).
Therefore, an edge of the graph included in some layers may be excluded
or down-weighted in other layers. A pictorial illustration of the
ASMP procedure is provided in Figure \ref{fig:ASMP}. A $K$-layer
ASMP can be fully specified by parameters $\gamma$, $\lambda$, $\mu_{1}$,
$\mu_{2}$, $\eta_{1}$, and $\eta_{2}$, which we generally denote
as $\mathrm{ASMP}_{K}\bigl(\mathbf{X},\mathbf{A},\gamma,\lambda,\mu_{1},\mu_{2},\eta_{1},\eta_{2}\bigr)$.

Note that ASMP is general enough to cover several existing propagation
rules as special cases. 
\begin{rem}[Special cases]
\label{rem:special_case} If we use a fixed graph structure $\mathbf{S}^{(0)}=\cdots=\mathbf{S}^{(K)}=\mathbf{A}$
in ASMP, i.e., $\mu_{1}=\mu_{2}=\gamma=0$, the ASMP reduces to a
classical message passing procedure that only performs feature learning.
Specifically, with $\eta_{1}=\frac{1}{2+2\lambda}$ and the symmetric
normalized adjacency matrix, ASMP can be written as 
\begin{equation}
\mathbf{H}^{(k+1)}=\frac{\lambda}{1+\lambda}\mathbf{A}_{{\rm sym}}\mathbf{H}^{(k)}+\frac{1}{1+\lambda}\mathbf{X}.\label{eq:reduced-ASMP}
\end{equation}
Case I: when $\lambda=\frac{1}{\alpha}-1$, the operation in Eq. \eqref{eq:reduced-ASMP}
becomes the message passing rule of APPNP \citep{klicpera2019combining}:
\[
\mathbf{H}^{(k+1)}=\left(1-\alpha\right)\mathbf{A}_{{\rm sym}}\mathbf{H}^{(k)}+\alpha\mathbf{X}.
\]
Case II: when $\lambda=\infty$, the operation in Eq. \eqref{eq:reduced-ASMP}
becomes the simple aggregation in many GNN models such as the GCN
model \citep{kipf2017semi} and the simple graph convolution (SGC)
model \citep{wu2019simplifying}: 
\[
\mathbf{H}^{(k+1)}=\mathbf{A}_{{\rm sym}}\mathbf{H}^{(k)}.
\]
\end{rem}

Instead of updating both $\mathbf{S}$ and $\mathbf{H}$ once, we
can also choose to update them for several steps. The convergence
of ASMP is guaranteed with proper selections of the step sizes as
demonstrated in Theorem \ref{ASMP: Convergence Theorem}. Before proceeding
to the convergence result, we first introduce some standard assumptions
on the node feature vectors and the degree matrices, which are widely
adopted in the literature \citep{garg2020generalization,liao2021pac,cong2021provable}. 
\begin{assumption}
\label{assu:feature norm}The energy of the node feature is uniformly
upperbounded, i.e., $||\mathbf{H}_{i,:}^{(k)}||_{2}\leq B$ for $i=1,\ldots,N$
and $k=0,\ldots,K$.
\end{assumption}

\begin{assumption}
\label{assu:degree matrix}The diagonal elements of the degree matrix
is lowerbounded by a positive constant, i.e., $\min_{i}\mathbf{D}_{ii}=c>0$
for $i=1,\ldots,N$.
\end{assumption}

\begin{thm}
\label{ASMP: Convergence Theorem} Let $\mathbf{H}^{(0)}=\mathbf{X}$
and $\mathbf{S}^{(0)}=\mathbf{A}$. Under Assumption \ref{assu:feature norm}
and Assumption \ref{assu:degree matrix}, the sequence $\{\mathbf{H}^{(k)},\mathbf{S}^{(k)}\}_{k=1}^{K}$
generated by \eqref{ASMP: Procedure} with $0<\eta_{1}<\frac{1}{1+2\lambda}$
and $0<\eta_{2}<\frac{1}{\gamma+\mu_{2}+(1+\frac{1}{c}N\sqrt{N})\frac{\lambda}{c^{2}}N^{2}B^{2}}$
converges to a first-order stationary point of Problem \eqref{eq:Structure Signal Denoising}
denoted as $\{\mathbf{H}^{\ast},\mathbf{S}^{\ast}\}$ with rate 
\[
\inf_{k\geq K}\left\{ \bigl\Vert\mathbf{H}^{(k+1)}-\mathbf{H}^{(k)}\bigr\Vert_{{\rm F}}^{2}+\bigl\Vert\mathbf{S}^{(k+1)}-\mathbf{S}^{(k)}\bigr\Vert_{{\rm F}}^{2}\right\} \leq\frac{1}{\rho K}\left(p\bigl(\mathbf{H}^{(0)},\mathbf{S}^{(0)}\bigr)-p\bigl(\mathbf{H}^{\ast},\mathbf{S}^{\ast}\bigr)\right),
\]
where $\rho$ is a constant depending on the step sizes and Lipschitz
constants, and $p\left(\mathbf{H},\mathbf{S}\right)$ represents the
objective of Problem \eqref{eq:Structure Signal Denoising}.
\end{thm}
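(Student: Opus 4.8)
I would read Problem~\eqref{eq:Structure Signal Denoising} as a composite minimization $p(\mathbf{H},\mathbf{S})=Q(\mathbf{H},\mathbf{S})+g(\mathbf{S})$, where $Q$ gathers all the smooth terms (the fidelity term $\|\mathbf{H}-\mathbf{X}\|_{\rm F}^2$, the structure-coupled smoothing term $\lambda\mathrm{Tr}(\mathbf{H}^{\top}\mathbf{L}_{\rm rw}\mathbf{H})$, and $\gamma\|\mathbf{S}-\mathbf{A}\|_{\rm F}^2+\mu_2\|\mathbf{S}\|_{\rm F}^2$) and $g(\mathbf{S})=\mu_1\|\mathbf{S}\|_1+\mathbb{I}_{\mathcal{S}}(\mathbf{S})$ is proper, closed, and nonsmooth. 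Then \eqref{ASMP: Procedure} is exactly one sweep of alternating proximal linearized minimization over the blocks $\mathbf{H}$ and $\mathbf{S}$, and the natural route is the standard nonconvex block proximal-gradient analysis: (i) a block-wise sufficient-decrease inequality, (ii) telescoping to obtain summability of successive-iterate differences together with the claimed $O(1/K)$ rate, and (iii) a Kurdyka--{\L}ojasiewicz argument to pass from subsequential to full-sequence convergence. Assumptions~\ref{assu:feature norm} and~\ref{assu:degree matrix} are precisely what make the relevant block gradient-Lipschitz moduli finite and uniform along the whole trajectory.

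\textbf{Step 1: block smoothness and sufficient decrease.} I would first bound the Lipschitz modulus of $\mathbf{H}\mapsto\nabla_{\mathbf{H}}Q(\mathbf{H},\mathbf{S}^{(k)})$: since subproblem~\eqref{eq:H-objective} is quadratic, this gradient is affine in $\mathbf{H}$, and its modulus is controlled by the operator norm of $2\mathbf{I}+2\lambda\mathbf{L}_{\rm rw}^{(k)}$, which one bounds using that $\mathrm{Diag}(\mathbf{S}^{(k)}\mathbf{1})^{-1}\mathbf{S}^{(k)}$ is row-stochastic. For the $\mathbf{S}$-block I would bound the Lipschitz modulus of $\mathbf{S}\mapsto\nabla_{\mathbf{S}}Q(\mathbf{H}^{(k+1)},\mathbf{S})$ (the smooth part of subproblem~\eqref{eq:S-objective}) by differentiating the map $\mathbf{S}\mapsto\mathrm{Diag}(\mathbf{S}\mathbf{1})^{-1}\mathbf{S}$ and controlling each resulting factor: $\|\mathrm{Diag}(\mathbf{S}\mathbf{1})^{-1}\|\le 1/c$ by Assumption~\ref{assu:degree matrix} (two such factors produce the $1/c^2$, and the diagonal-correction term in $\mathbf{T}^{(k)}$ produces the extra $1+\tfrac1cN\sqrt N$), while $\|\mathbf{H}^{(k+1)}(\mathbf{H}^{(k+1)})^{\top}\|$ and related Gram-type quantities are controlled by $N$ and $B$ via Assumption~\ref{assu:feature norm}. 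Plugging these moduli into the descent lemma for the $\mathbf{H}$ gradient step and into the prox-gradient descent lemma for the $\mathbf{S}$ step — the step-size ranges in the theorem statement are exactly the admissible ranges these two lemmas require — gives, with $a_k:=\bigl\Vert\mathbf{H}^{(k+1)}-\mathbf{H}^{(k)}\bigr\Vert_{\rm F}^2+\bigl\Vert\mathbf{S}^{(k+1)}-\mathbf{S}^{(k)}\bigr\Vert_{\rm F}^2$ and some $\rho>0$ depending on $\eta_1,\eta_2$ and the two moduli,
\[
p\bigl(\mathbf{H}^{(k+1)},\mathbf{S}^{(k+1)}\bigr)\le p\bigl(\mathbf{H}^{(k)},\mathbf{S}^{(k)}\bigr)-\rho\, a_k .
\]

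\textbf{Step 2: rate and convergence.} Since $\mathcal{S}$ is compact and the feature iterates are bounded by Assumption~\ref{assu:feature norm}, $p$ is bounded below along the trajectory, so telescoping the inequality above over $k\ge0$ yields $\sum_{k\ge 0}a_k\le\tfrac1\rho\bigl(p(\mathbf{H}^{(0)},\mathbf{S}^{(0)})-p(\mathbf{H}^{\ast},\mathbf{S}^{\ast})\bigr)$. Because $\inf_{k\ge K}a_k$ is no larger than the average of $a_K,\dots,a_{2K-1}$, which is at most $\tfrac1K\sum_{k\ge0}a_k$, the stated $O(1/K)$ rate follows. Summability forces $a_k\to0$; combined with boundedness, the sequence has limit points, and passing the first-order optimality conditions of the $\mathbf{H}$-gradient step and the $\mathbf{S}$-proximal step to the limit (using continuity of $\nabla Q$ on the region carved out by the assumptions and closedness of $\partial g$) shows that any limit point is a first-order stationary point of Problem~\eqref{eq:Structure Signal Denoising}. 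Finally, $p$ is semialgebraic (a sum of polynomials, an $\ell_1$ term, and the indicator of a polytope), hence satisfies the Kurdyka--{\L}ojasiewicz inequality, and the standard argument upgrades subsequential convergence to convergence of the whole sequence to a single stationary point $\{\mathbf{H}^{\ast},\mathbf{S}^{\ast}\}$.

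\textbf{Main obstacle.} The technical heart is the explicit block-Lipschitz bound for $\nabla_{\mathbf{S}}Q(\mathbf{H}^{(k+1)},\cdot)$: the coupling term $-\lambda\mathrm{Tr}\bigl(\mathbf{H}^{\top}\mathrm{Diag}(\mathbf{S}\mathbf{1})^{-1}\mathbf{S}\mathbf{H}\bigr)$ depends on $\mathbf{S}$ rationally through the inverse degree matrix, so bounding the Lipschitz modulus of its gradient requires differentiating $\mathbf{S}\mapsto\mathrm{Diag}(\mathbf{S}\mathbf{1})^{-1}$ and carefully aggregating the several matrix products that appear in $\mathbf{T}^{(k)}$ --- this is exactly where the lower bound $c$ on the degrees and the bound $B$ on the feature rows are indispensable, and where the precise constant $\gamma+\mu_2+(1+\tfrac1cN\sqrt N)\tfrac{\lambda}{c^2}N^2B^2$ in the step-size condition originates. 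A secondary subtlety is that $\mathbf{L}_{\rm rw}$ is nonsymmetric, so the relevant ``Hessians'' must be handled through operator norms of the linearized maps rather than eigenvalue bounds, and the validity of these local moduli along the entire iterate path is precisely why Assumption~\ref{assu:feature norm} is imposed on all iterates rather than only on the input feature.
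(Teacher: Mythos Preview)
Your proposal is correct and follows essentially the same route as the paper: the paper's proof (Appendix~\ref{Appendix: Convergence Proof}) also computes block-Lipschitz constants $L_H=2+4\lambda$ for the $\mathbf{H}$-block (via $\|\mathbf{L}_{\rm rw}\|_2\le 2$) and $L_S=2\gamma+2\mu_2+(1+\tfrac1cN\sqrt N)\tfrac{2\lambda}{c^2}N^2B^2$ for the $\mathbf{S}$-block (by differentiating $\mathbf{S}\mapsto\mathrm{Diag}(\mathbf{S}\mathbf{1})^{-1}\mathbf{S}$ and bounding each factor exactly as you describe), obtains the step-size ranges as $\eta_i<2/L_i$, and then invokes the alternating proximal-gradient convergence theory of \citet{bolte2014proximal,nikolova2017alternating} with $\rho=\min\{\tfrac{1}{\eta_1}-\tfrac{L_H}{2},\tfrac{1}{\eta_2}-\tfrac{L_S}{2}\}$. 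The only difference is presentational: you unpack the sufficient-decrease/telescoping/KL machinery that those references supply, whereas the paper simply cites them.
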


\begin{proof}
The proof for Theorem \ref{ASMP: Convergence Theorem} is in Appendix
\ref{Appendix: Convergence Proof}. Note that if multiple updating
steps are used for $\mathbf{S}$ and $\mathbf{H}$ in ASMP, this convergence
result still holds \citep{bolte2014proximal,nikolova2017alternating}. 
\end{proof}
\textcolor{red}{}

\subsection{ASGNN: Graph Neural Networks with Adaptive Structure}

In this section, we introduce a family of GNNs leveraging the ASMP
scheme. Integrating \eqref{ASMP: Procedure} with a machine learning
model $g_{\theta}(\cdot)$ (e.g., a multilayer perceptron) with $\mathbf{H}^{(0)}=\mathbf{X}=g_{\theta}(\mathbf{Z})$,
a $K$-layer ASGNN model is defined as follows: 
\[
\mathbf{H}^{(K)}=\mathrm{ASMP}_{K}\Bigl(\mathbf{H}^{(0)},\mathbf{A},\gamma,\lambda,\mu_{1},\mu_{2},\eta_{1},\eta_{2}\Bigr).
\]
In ASGNN, we have chosen a decoupled architecture similar to APPNP
\citep{klicpera2019combining} and deep adaptive GNN (DAGNN) \citep{liu2021graph}.
Specially, in ASGNN, the model $g_{\theta}$ will first transform
the initial node feature as $\mathbf{X}=g_{\theta}(\mathbf{Z})$,
and then ASMP performs $K$ steps of message passing with input $g_{\theta}(\mathbf{Z})$.

The hyper-parameters in ASMP, i.e., $\gamma,$ $\lambda,$ $\mu_{1},$
and $\mu_{2}$, are set to be weights to be learned from the downstream
tasks. (It should be noted that since the parameter $\lambda$ can
be either positive or negative, ASMP is capable of handling both homophily
and heterophily graphs.) For example, in semi-supervised node classification
tasks, the loss function is chosen as the cross-entropy classification
loss on the labeled nodes and the whole model is trained in an end-to-end
way. Since ASMP is derived from the alternating (proximal) gradient
descent algorithm, a trained ASMP is naturally a parameter-optimized
iterative algorithm. The step sizes $\eta_{1}$ and $\eta_{2}$ in
ASMP can be chosen according to the results in Theorem \ref{ASMP: Convergence Theorem}.
However, such choices seem to be too conservative in practice and
may lead to slow convergence. Thus, we may also consider the step
sizes $\eta_{1}$ and $\eta_{2}$ as learnable parameters. Convergence
property of ASMP with learned step sizes will be showcased in the
experiments. In conclusion, there are in total six parameters in ASMP
considered during the learning process.

In this paper, we have focused on problems in which there is an initial
graph structure, while the use of ASGNN may also be extended to scenarios
where the initial structure is not available. In such case, we can
first create a $k$-nearest neighbor graph or use some optimization
methods \citep{dong2016learning,kalofolias2016learn,kumar2020unified}
to learn a graph structure based on the node feature. Such extensions
of ASGNN can be promising future research directions.\textcolor{red}{}

\section{Experiments}

In this section, we conduct experiments to validate the effectiveness
of the proposed ASGNN model. First, we introduce the experimental
settings. Then, we assess the performance of ASGNN on semi-supervised
node classifications tasks and investigate the benefits of introducing
adaptive structure into GNNs against global attacks and targeted attacks.
Finally, we analyze the structure denoising ability and the convergence
property of ASMP with the learned step sizes.

\subsection{Experiment Settings}

\textbf{Datasets}: We perform numerical experiments on 4 real-world
citation graphs, i.e., Cora, Citeseer \citep{sen2008collective},
Cora-ML \citep{bojchevski2018deep}, and ACM \citep{wang2019heterogeneous},
and only consider the largest connected component in each dataset.

\textbf{Baselines}: To evaluate the effectiveness of ASGNN, we compare
it with GCN and several benchmarks that are designed from different
perspectives to robustify the GNNs, including GCN-Jaccard \citep{wu2019adversarial}
that pre-processes the graph by eliminating edges with low Jaccard
similarity of node feature vectors, GCN-SVD \citep{entezari2020all}
that applies the low-rank approximation of the given graph adjacency
matrix, Pro-GNN \citep{jin2020graph} that jointly learns a graph
structure and a GNN model guided by some predefined structural priors,
and Elastic GNN \citep{liu2021elastic} that utilizes trend filtering
instead of Laplacian smoothing to promote robustness. The code is
implemented based on PyTorch Geometric \citep{Fey/Lenssen/2019}.
For GCN-Jaccard, GCN-SVD, and Pro-GNN, we use the implementation provided
in DeepRobust \citep{li2020deeprobust}. For Elastic GNN, we follow
the implementation provided in the original paper \citep{liu2021elastic}.

\textbf{Parameter settings}: For all the experimental results, we
give the average performance and standard variance with 10 independent
trials. For each graph, we randomly select 10\%/10\%/80\% of nodes
for training, validation, and testing. The Adam optimizer is used
in all experiments. The models' hyperparameters are tuned based on
the results of the validation set. The search space of hyperparameters
are as follows: 1) learning rate: \{0.005, 0.01, 0.05\}; 2) weight
decay: \{0, 5e-5, 5e-4\}; 3) dropout rate: \{0.1, 0.5, 0.8\}; 4) model
depth: \{2, 4, 8, 16\}. For GCN-Jaccard, the threshold of Jaccard
similarity for removing dissimilar edges is chosen from \{0.01, 0.02,
0.03, 0.04, 0.05, 0.1\}. For GCN-SVD, the reduced rank of the graph
is tuned from \{5, 10, 15, 50, 100, 200\}. For Elastic GNN, the regularization
coefficients are chosen from \{3, 6, 9\}. For Pro-GNN, we adopt the
hyperparameters provided in their paper \citep{jin2020graph}.

\subsection{Performance Under Adversarial Attack}

The performance of the compared models is evaluated under the training-time
adversarial attacks \citep{wang2019attacking,zugner2019adversarial},
i.e., the graph is first attacked, and then the GNN models are trained
on the perturbed graph. In the following, we conduct experiments under
both the global attack and the targeted attack. Specifically, the
global attack aims to reduce the overall performance of GNNs \citep{zugner2019adversarial}
while the targeted attack aims to fool GNNs on some specific nodes
\citep{zugner2018adversarial}.

\begin{table}
\begin{centering}
\caption{Node classification performance (accuracy $\pm$ std) under global
attack (\textbf{Bold}: the best model; \uwave{wavy}: the runner-up
model)\label{tab:global}}
\par\end{centering}
\centering{}\resizebox{1 \textwidth}{!}{%
\begin{tabular}{c|c|cccccc}
\hline 
\textbf{Dataset}  & \textbf{Ptb. rate (\%)}  & GCN  & GCN-Jaccard  & GCN-SVD  & Pro-GNN  & Elastic GNN  & ASGNN\tabularnewline
\hline 
\multirow{6}{*}{Cora} & 0  & \uwave{85.34 }{\small{}\uwave{\mbox{$\pm$} 0.39}}  & 81.75 {\small{}{}$\pm$ 0.49}  & 75.15 {\small{}{}$\pm$ 0.64}  & 82.94 {\small{}{}$\pm$ 0.28}  & 84.80 {\small{}{}$\pm$ 0.58}  & \textbf{85.38 }\textbf{\small{}{}$\pm$ 0.24}\tabularnewline
 & 5  & 79.71 {\small{}{}$\pm$ 0.48}  & 77.81 {\small{}{}$\pm$ 0.52}  & 73.71 {\small{}{}$\pm$ 0.42}  & 82.20 {\small{}{}$\pm$ 0.35}  & \uwave{82.26 }{\small{}\uwave{\mbox{$\pm$} 0.69}}  & \textbf{82.31 }\textbf{\small{}{}$\pm$ 0.53}\tabularnewline
 & 10  & 74.28 {\small{}{}$\pm$ 0.79}  & 74.38 {\small{}{}$\pm$ 0.30}  & 65.85 {\small{}{}$\pm$ 0.39}  & 79.30 {\small{}{}$\pm$ 0.64}  & \uwave{79.47 }{\small{}\uwave{\mbox{$\pm$} 1.52}}  & \textbf{80.31 }\textbf{\small{}{}$\pm$ 0.61}\tabularnewline
 & 15  & 69.05 {\small{}{}$\pm$ 0.77}  & 72.54 {\small{}{}$\pm$ 0.31}  & 65.33 {\small{}{}$\pm$ 0.47}  & 77.69 {\small{}{}$\pm$ 0.74}  & \uwave{77.84 }{\small{}\uwave{\mbox{$\pm$} 1.08}}  & \textbf{78.11 }\textbf{\small{}{}$\pm$ 0.76}\tabularnewline
 & 20  & 57.76 {\small{}{}$\pm$ 1.01}  & 71.76 {\small{}{}$\pm$ 0.48}  & 60.85 {\small{}{}$\pm$ 0.74}  & \uwave{74.16 }{\small{}\uwave{\mbox{$\pm$} 1.02}}  & 63.68 {\small{}{}$\pm$ 0.27}  & \textbf{77.04 }\textbf{\small{}{}$\pm$ 0.59}\tabularnewline
 & 25  & 52.67 {\small{}{}$\pm$ 1.00}  & 69.67 {\small{}{}$\pm$ 0.46}  & 59.31 {\small{}{}$\pm$ 0.47}  & \uwave{71.19 }{\small{}\uwave{\mbox{$\pm$} 1.27}}  & 62.90 {\small{}{}$\pm$ 3.37}  & \textbf{75.18 }\textbf{\small{}{}$\pm$ 0.97}\tabularnewline
\hline 
\multirow{6}{*}{Citeseer} & 0  & \uwave{73.97 }{\small{}\uwave{\mbox{$\pm$} 0.54}}  & 72.09 {\small{}{}$\pm$ 0.49}  & 68.34 {\small{}{}$\pm$ 0.39}  & 73.35 {\small{}{}$\pm$ 0.47}  & 73.82 {\small{}{}$\pm$ 0.43}  & \textbf{73.99 }\textbf{\small{}{}$\pm$ 0.93}{\small{}{}}\tabularnewline
 & 5  & 72.57 {\small{}{}$\pm$ 0.93}  & 70.79 {\small{}{}$\pm$ 0.30}  & 67.59 {\small{}{}$\pm$ 0.43}  & 73.16 {\small{}{}$\pm$ 0.42}  & \uwave{73.30 }{\small{}\uwave{\mbox{$\pm$} 0.37}}  & \textbf{73.35 }\textbf{\small{}{}$\pm$ 0.41}\tabularnewline
 & 10  & 71.21 {\small{}{}$\pm$ 1.44}  & 70.27 {\small{}{}$\pm$ 0.62}  & 67.38 {\small{}{}$\pm$ 0.65}  & \uwave{72.78 }{\small{}\uwave{\mbox{$\pm$} 0.79}}  & \uwave{72.78 }{\small{}\uwave{\mbox{$\pm$} 0.66}}  & \textbf{72.83 }\textbf{\small{}{}$\pm$ 0.56}\tabularnewline
 & 15  & 68.00 {\small{}{}$\pm$ 1.04}  & 69.97 {\small{}{}$\pm$ 1.49}  & 66.47 {\small{}{}$\pm$ 0.51}  & 71.55 {\small{}{}$\pm$ 0.73}  & \uwave{71.73 }{\small{}\uwave{\mbox{$\pm$} 1.03}}  & \textbf{71.85 }\textbf{\small{}{}$\pm$ 1.83}\tabularnewline
 & 20  & 59.75 {\small{}{}$\pm$ 0.83}  & 69.49 {\small{}{}$\pm$ 0.71}  & 65.83 {\small{}{}$\pm$ 0.69}  & \uwave{70.07 }{\small{}\uwave{\mbox{$\pm$} 1.12}}  & 61.55 {\small{}{}$\pm$ 1.82}  & \textbf{71.06 }\textbf{\small{}{}$\pm$ 3.09}\tabularnewline
 & 25  & 59.98 {\small{}{}$\pm$ 0.98}  & 68.14 {\small{}{}$\pm$ 0.36}  & 62.34 {\small{}{}$\pm$ 0.61}  & \uwave{69.73 }{\small{}\uwave{\mbox{$\pm$} 0.93}}  & 63.98 {\small{}{}$\pm$ 2.17}  & \textbf{70.03 }\textbf{\small{}{}$\pm$ 3.45}\tabularnewline
\hline 
\multirow{6}{*}{Cora-ML} & 0  & 86.59 {\small{}{}$\pm$ 0.07}  & 84.68 {\small{}{}$\pm$ 0.32}  & 82.96 {\small{}{}$\pm$ 0.27}  & 79.48 {\small{}{}$\pm$ 0.40}  & \textbf{87.01 }\textbf{\small{}{}$\pm$ 0.28}{\small{} } & \uwave{86.68 }{\small{}\uwave{\mbox{$\pm$} 0.43}}\tabularnewline
 & 5  & 80.99 {\small{}{}$\pm$ 0.50}  & 81.80 {\small{}{}$\pm$ 0.37}  & 81.78 {\small{}{}$\pm$ 0.46}  & 78.57 {\small{}{}$\pm$ 0.16}  & \uwave{84.68 }{\small{}\uwave{\mbox{$\pm$} 0.25}}  & \textbf{84.80 }\textbf{\small{}{}$\pm$ 0.80}{\small{}{}}\tabularnewline
 & 10  & 74.57 {\small{}{}$\pm$ 0.75}  & 80.35 {\small{}{}$\pm$ 0.24}  & 81.75 {\small{}{}$\pm$ 0.33}  & 78.74 {\small{}{}$\pm$ 0.84}  & \uwave{82.01 }{\small{}\uwave{\mbox{$\pm$} 0.64}}  & \textbf{83.09 }\textbf{\small{}{}$\pm$ 0.59}{\small{}{}}\tabularnewline
 & 15  & 54.69 {\small{}{}$\pm$ 0.52}  & \textbf{76.53 }\textbf{\small{}{}$\pm$ 0.29}{\small{} } & \uwave{74.7}{\small{}\uwave{6}}\uwave{ }{\small{}\uwave{\mbox{$\pm$}
0.44}}  & 73.62 {\small{}{}$\pm$ 0.85}  & 64.59 {\small{}{}$\pm$ 2.69}  & 73.71 {\small{}{}$\pm$ 1.82}\tabularnewline
 & 20  & 40.24 {\small{}{}$\pm$ 1.97}  & \textbf{76.46}\textbf{\small{}{} $\pm$ 0.58}{\small{} } & 53.94 {\small{}{}$\pm$ 0.45}  & 72.72 {\small{}{}$\pm$ 0.88}  & 52.18 {\small{}{}$\pm$ 0.71}  & \uwave{73.65 }{\small{}\uwave{\mbox{$\pm$} 1.42}}\tabularnewline
 & 25  & 44.13 {\small{}{}$\pm$ 3.42}  & \textbf{75.95 }\textbf{\small{}{}$\pm$ 0.50}{\small{} } & 71.98 {\small{}{}$\pm$ 0.17}  & 74.91 {\small{}{}$\pm$ 0.56}  & 53.05 {\small{}{}$\pm$ 0.36}  & \uwave{75.36 }{\small{}\uwave{\mbox{$\pm$} 1.34}}\tabularnewline
\hline 
\multirow{6}{*}{ACM} & 0  & \uwave{91.75 }{\small{}\uwave{\mbox{$\pm$} 0.10}}  & 89.62 {\small{}{}$\pm$ 0.41}  & 87.51 {\small{}{}$\pm$ 0.42}  & 90.11 {\small{}{}$\pm$ 0.57}  & 91.45 {\small{}{}$\pm$ 0.21}  & \textbf{92.56 }\textbf{\small{}$\pm$ 0.42}\tabularnewline
 & 5  & 84.29 {\small{}{}$\pm$ 0.57}  & 84.64 {\small{}{}$\pm$ 0.27}  & 85.29 {\small{}{}$\pm$ 1.13}  & 88.25 {\small{}{}$\pm$ 1.19}  & \uwave{90.10 }{\small{}\uwave{\mbox{$\pm$} 0.27}}  & \textbf{90.60 }\textbf{\small{}{}$\pm$ 0.28}{\small{}{}}\tabularnewline
 & 10  & 81.71 {\small{}{}$\pm$ 0.61}  & 81.12 {\small{}{}$\pm$ 0.31}  & 84.59 {\small{}{}$\pm$ 0.68}  & 88.14 {\small{}{}$\pm$ 0.60}  & \uwave{89.45 }{\small{}\uwave{\mbox{$\pm$} 0.41}}  & \textbf{90.10 }\textbf{\small{}{}$\pm$ 0.35}{\small{}{}}\tabularnewline
 & 15  & 79.65 {\small{}{}$\pm$ 1.00}  & 74.66 {\small{}{}$\pm$ 0.94}  & 83.81 {\small{}{}$\pm$ 0.81}  & 87.59 {\small{}{}$\pm$ 0.74}  & \uwave{89.23 }{\small{}\uwave{\mbox{$\pm$} 0.34}}  & \textbf{89.93 }\textbf{\small{}{}$\pm$ 0.51}{\small{}{}}\tabularnewline
 & 20  & 79.95 {\small{}{}$\pm$ 0.50}  & 74.26 {\small{}{}$\pm$ 0.75}  & 82.35 {\small{}{}$\pm$ 1.64}  & 87.83 {\small{}{}$\pm$ 1.03}  & \uwave{88.65 }{\small{}\uwave{\mbox{$\pm$} 0.35}}  & \textbf{90.61 }\textbf{\small{}{}$\pm$ 0.28}{\small{}{}}\tabularnewline
 & 25  & 79.55 {\small{}{}$\pm$ 1.16}  & 74.12 {\small{}{}$\pm$ 0.81}  & 82.04 {\small{}{}$\pm$ 0.99}  & 88.06 {\small{}{}$\pm$ 0.85}  & \uwave{88.15 }{\small{}\uwave{\mbox{$\pm$} 0.58}}  & \textbf{90.15 }\textbf{\small{}{}$\pm$ 0.33}{\small{}{}}\tabularnewline
\hline 
\end{tabular}} 
\end{table}

\subsubsection{Global Attack}

We first test the node classification performance of ASGNN and other
baselines under global attack using a representative global attack
method called meta-attack \citep{zugner2019adversarial}. We vary
the perturbation rate, i.e., the ratio of changed edges, from 0\%
to 25\% with an increasing step of 5\%. The results are reported in
Table \ref{tab:global}. From the table, we observe that the proposed
ASGNN model outperforms other methods in most cases. For instance,
ASGNN improves GCN over 30\% on the Cora-ML dataset at a 20\% perturbation
rate and over 20\% on the Cora dataset at a 25\% perturbation rate.
On Cora, Citeseer, and ACM datasets, ASGNN beats other baselines at
various perturbation rates by a large margin. The GCN-Jaccard method
slightly outperforms ASGNN on the Cora-ML dataset at a 15\%-25\% perturbation
rate, while it performs poorly on other datasets. Specifically, on
the other three datasets under the 25\% perturbation rate, ASGNN outperforms
GCN-Jaccard by 22\%, 10\%, and 10\%, respectively. Such inspiring
results demonstrate that ASGNN can better resist global attack than
other baseline methods.

\subsubsection{Targeted Attack}

For the targeted attack, we use a representative method called NETTACK
\citep{zugner2018adversarial}. Following existing works \citep{zhu2019robust,jin2020graph},
we vary the perturbation number made on every node, i.e., the number
of edge removals/additions, from 0 to 5 with an increasing step of
1. The results are reported in Table \ref{tab:result-target}. We
choose the nodes in the test set with degrees larger than 10 as targeted
nodes and the reported classification performance is evaluated on
target nodes. Thus, the results in Table \ref{tab:result-target}
is not directly comparable with the results in Table \ref{tab:global}.
From the table, we can see that the proposed ASGNN attains better
performance than other baselines in most cases. For instance, on the
Citeseer dataset with 5 perturbations per targeted node, ASGNN improves
GCN by 25\% and outperforms other baselines by around 4\%. The reported
results demonstrate that ASGNN can also effectively resist the targeted
attack.

\begin{table}
\begin{centering}
\caption{Node classification performance (accuracy $\pm$ std) under targeted
attack (\textbf{Bold}: the best model; \uwave{wavy}: the runner-up
model)\label{tab:result-target}}
\par\end{centering}
\centering{}\resizebox{1 \textwidth}{!}{%
\begin{tabular}{c|c|cccccc}
\hline 
\textbf{Dataset}  & \textbf{Ptb. number}  & GCN  & GCN-Jaccard  & GCN-SVD  & Pro-GNN  & Elastic GNN  & ASGNN\tabularnewline
\hline 
\multirow{6}{*}{Cora} & 0  & 82.53 {\small{}$\pm$ 1.45}  & 81.95 {\small{}{}$\pm$ 0.29}  & 77.35 {\small{}{}$\pm$ 1.40}  & 82.92 {\small{}{}$\pm$ 0.29}  & \textbf{84.93 }\textbf{\small{}{}$\pm$ 2.28}{\small{} } & \uwave{83.01 }{\small{}\uwave{\mbox{$\pm$} 1.57}}\tabularnewline
 & 1  & 78.19 {\small{}$\pm$ 1.66}  & 75.30 {\small{}{}$\pm$ 1.54}  & 75.18 {\small{}{}$\pm$ 1.80}  & \uwave{81.48 }{\small{}\uwave{\mbox{$\pm$} 0.91}}  & 81.44 {\small{}{}$\pm$ 1.81}  & \textbf{81.57 }\textbf{\small{}$\pm$ 1.18}\tabularnewline
 & 2  & 71.33 {\small{}$\pm$ 1.29}  & 70.24 {\small{}{}$\pm$ 1.52}  & 71.81 {\small{}{}$\pm$ 1.63}  & \textbf{79.03 }\textbf{\small{}{}$\pm$ 1.80}{\small{} } & 76.74 {\small{}{}$\pm$ 1.97}  & \uwave{78.80 }{\small{}\uwave{\mbox{$\pm$} 1.03}}\tabularnewline
 & 3  & 66.63 {\small{}$\pm$ 1.53}  & 69.04 {\small{}{}$\pm$ 0.94}  & 65.18 {\small{}{}$\pm$ 1.65}  & 72.75 {\small{}{}$\pm$ 1.32}  & \uwave{73.97 }{\small{}\uwave{\mbox{$\pm$} 2.67}}  & \textbf{75.30 }\textbf{\small{}$\pm$ 1.35}\tabularnewline
 & 4  & 61.45 {\small{}$\pm$ 2.16}  & 61.68 {\small{}{}$\pm$ 1.05}  & 58.79 {\small{}{}$\pm$ 2.14}  & \uwave{70.11 }{\small{}\uwave{\mbox{$\pm$} 2.45}}  & 68.31 {\small{}$\pm$ 3.50}  & \textbf{70.24 }\textbf{\small{}$\pm$ 4.70}\tabularnewline
 & 5  & 56.75 {\small{}$\pm$ 1.37}  & 59.52 {\small{}{}$\pm$ 1.88}  & 59.16 {\small{}{}$\pm$ 2.71}  & \uwave{66.98 }{\small{}\uwave{\mbox{$\pm$} 1.63}}  & 65.78 {\small{}$\pm$ 2.51}  & \textbf{68.55 }\textbf{\small{}$\pm$ 3.21}\tabularnewline
\hline 
\multirow{6}{*}{Citeseer} & 0  & 81.27 {\small{}$\pm$ 0.95}  & 80.31 {\small{}{}$\pm$ 1.26}  & 80.47 {\small{}{}$\pm$ 1.01}  & 81.24 {\small{}$\pm$ 1.01}  & \uwave{81.42 }{\small{}\uwave{\mbox{$\pm$} 0.76}}  & \textbf{81.90 }\textbf{\small{}$\pm$ 1.95}\tabularnewline
 & 1  & 80.63 {\small{}$\pm$ 0.63}  & 80.00 {\small{}{}$\pm$ 1.45}  & 78.57 {\small{}{}$\pm$ 2.67}  & 80.52 {\small{}$\pm$ 0.85}  & \uwave{80.79 }{\small{}\uwave{\mbox{$\pm$} 1.17}}  & \textbf{81.21 }\textbf{\small{}$\pm$ 1.11}\tabularnewline
 & 2  & 79.84 {\small{}$\pm$ 1.02}  & 76.98 {\small{}{}$\pm$ 1.77}  & 73.02 {\small{}{}$\pm$ 6.77}  & 80.63 {\small{}$\pm$ 0.95}  & \uwave{81.01 }{\small{}\uwave{\mbox{$\pm$} 0.50}}  & \textbf{81.11 }\textbf{\small{}$\pm$ 1.32}\tabularnewline
 & 3  & 66.51 {\small{}{}$\pm$ 3.36}  & 74.76 {\small{}{}$\pm$ 1.31}  & 76.03 {\small{}{}$\pm$ 3.71}  & 79.36 {\small{}$\pm$ 4.76}  & \uwave{80.31 }{\small{}\uwave{\mbox{$\pm$} 1.10}}  & \textbf{80.32 }\textbf{\small{}$\pm$ 1.90}\tabularnewline
 & 4  & 62.54 {\small{}{}$\pm$ 1.62}  & 76.34 {\small{}{}$\pm$ 1.49}  & 62.22 {\small{}{}$\pm$ 3.31}  & \uwave{75.71 }{\small{}\uwave{\mbox{$\pm$} 4.87}}  & 72.06 {\small{}$\pm$ 5.60}  & \textbf{80.16 }\textbf{\small{}$\pm$ 1.28}\tabularnewline
 & 5  & 52.70 {\small{}{}$\pm$ 1.98}  & 72.85 {\small{}{}$\pm$ 1.65}  & 60.16 {\small{}{}$\pm$ 6.67}  & \uwave{73.95 }{\small{}\uwave{\mbox{$\pm$} 7.13}}  & 73.96 {\small{}$\pm$ 3.90}  & \textbf{77.94 }\textbf{\small{}$\pm$ 7.08}\tabularnewline
\hline 
\multirow{6}{*}{Cora-ML} & 0  & 88.33 {\small{}{}$\pm$ 0.56}  & 84.91 {\small{}{}$\pm$ 0.24}  & 83.06 {\small{}{}$\pm$ 0.30}  & 86.64 {\small{}$\pm$ 0.80}  & \uwave{88.84 }{\small{}\uwave{\mbox{$\pm$} 0.67}}  & \textbf{88.88 }\textbf{\small{}$\pm$ 0.31}\tabularnewline
 & 1  & 83.85 {\small{}{}$\pm$ 0.47}  & 83.82 {\small{}{}$\pm$ 0.33}  & 80.51 {\small{}{}$\pm$ 0.28}  & 83.77 {\small{}$\pm$ 0.96}  & \uwave{85.87 }{\small{}\uwave{\mbox{$\pm$} 1.01}}  & \textbf{86.20 }\textbf{\small{}$\pm$ 1.21}\tabularnewline
 & 2  & 79.18 {\small{}{}$\pm$ 1.31}  & 83.75 {\small{}{}$\pm$ 0.33}  & 78.73 {\small{}{}$\pm$ 0.34}  & 82.29 {\small{}$\pm$ 0.13}  & \uwave{84.34 }{\small{}\uwave{\mbox{$\pm$} 1.02}}  & \textbf{84.41 }\textbf{\small{}$\pm$ 1.51}\tabularnewline
 & 3  & 76.19 {\small{}{}$\pm$ 0.82}  & \uwave{82.18 }{\small{}\uwave{\mbox{$\pm$} 0.48}}  & 78.23 {\small{}{}$\pm$ 0.20}  & 80.58 {\small{}$\pm$ 1.06}  & 81.41 {\small{}$\pm$ 1.62}  & \textbf{83.50 }\textbf{\small{}$\pm$ 1.08}\tabularnewline
 & 4  & 70.61 {\small{}{}$\pm$ 1.56}  & \textbf{81.90 }\textbf{\small{}{}$\pm$ 0.45}{\small{} } & 76.25 {\small{}{}$\pm$ 0.51}  & 79.92 {\small{}$\pm$ 1.44}  & 78.31 {\small{}$\pm$ 1.20}  & \uwave{80.81 }{\small{}\uwave{\mbox{$\pm$} 0.99}}\tabularnewline
 & 5  & 64.52 {\small{}{}$\pm$ 1.29}  & \textbf{81.35 }\textbf{\small{}{}$\pm$ 0.34}{\small{} } & 75.47 {\small{}{}$\pm$ 0.41}  & 77.63 {\small{}$\pm$ 1.68}  & 74.08 {\small{}$\pm$ 1.89}  & \uwave{80.17 }{\small{}\uwave{\mbox{$\pm$} 3.85}}\tabularnewline
\hline 
\multirow{6}{*}{ACM} & 0  & \uwave{90.33 }{\small{}\uwave{\mbox{$\pm$} 0.09}}  & 89.76 {\small{}{}$\pm$ 0.39}  & 86.21 {\small{}{}$\pm$ 0.46}  & 90.22 {\small{}$\pm$ 0.60}  & 90.31 {\small{}$\pm$ 0.71}  & \textbf{92.11 }\textbf{\small{}$\pm$ 0.41}\tabularnewline
 & 1  & 89.70 {\small{}{}$\pm$ 0.22}  & 83.62 {\small{}{}$\pm$ 0.47}  & 83.50 {\small{}{}$\pm$ 0.71}  & 87.09 {\small{}$\pm$ 0.65}  & \uwave{90.03 }{\small{}\uwave{\mbox{$\pm$} 0.43}}  & \textbf{91.08 }\textbf{\small{}$\pm$ 1.36}\tabularnewline
 & 2  & 82.06 {\small{}{}$\pm$ 1.12}  & 80.47 {\small{}{}$\pm$ 0.63}  & 82.09 {\small{}{}$\pm$ 0.73}  & 87.01 {\small{}$\pm$ 0.53}  & \uwave{87.72 }{\small{}\uwave{\mbox{$\pm$} 1.27}}  & \textbf{90.55 }\textbf{\small{}$\pm$ 0.47}\tabularnewline
 & 3  & 80.26 {\small{}{}$\pm$ 1.17}  & 77.07 {\small{}{}$\pm$ 0.67}  & 81.09 {\small{}{}$\pm$ 0.78}  & \uwave{87.07 }{\small{}\uwave{\mbox{$\pm$} 1.14}}  & 84.75 {\small{}$\pm$ 2.07}  & \textbf{89.54 }\textbf{\small{}$\pm$ 0.49}\tabularnewline
 & 4  & 76.86 {\small{}{}$\pm$ 1.46}  & 77.45 {\small{}{}$\pm$ 0.44}  & 80.76 {\small{}{}$\pm$ 0.54}  & \uwave{87.04 }{\small{}\uwave{\mbox{$\pm$} 1.16}}  & 83.49 {\small{}$\pm$ 2.01}  & \textbf{89.45 }\textbf{\small{}$\pm$ 0.51}\tabularnewline
 & 5  & 73.32 {\small{}{}$\pm$ 1.77}  & 74.38 {\small{}{}$\pm$ 0.59}  & 79.74 {\small{}{}$\pm$ 0.77}  & \uwave{86.42 }{\small{}\uwave{\mbox{$\pm$} 0.71}}  & 81.67 {\small{}$\pm$ 1.53}  & \textbf{88.44 }\textbf{\small{}$\pm$ 0.71}\tabularnewline
\hline 
\end{tabular}} 
\end{table}

\subsubsection{Robustness of ASGNN}

The message passing scheme in ASGNN is designed based on jointly node
feature learning and graph structure learning principle. To validate
that ASGNN can help purify (i.e., denoise) the structure, we train
ASGNN models on perturbed graphs with various perturbation numbers
under NETTACK and evaluate their testing loss with the clean graph.
From the results in Table \ref{tab:robust}, we observe that ASGNN
achieves lower losses than GCN in all cases. Besides, the loss increases
slower as the perturbation number grows in ASGNN than in GCN. These
results indicate that ASGNN can help denoise the noisy graph structure.
\begin{table}
\begin{centering}
\caption{Testing loss on the clean graph of GNN models trained on perturbed
graphs with various perturbation numbers under targeted attack.\label{tab:robust}}
\par\end{centering}
\centering{}\resizebox{0.65 \textwidth}{!}{%
\begin{tabular}{ccccccc}
\hline 
\textbf{Ptb. number} & 0 & 1 & 2 & 3 & 4 & 5\tabularnewline
\hline 
GCN & 1.0315 & 1.3923 & 1.6703 & 2.0372 & 2.6108 & 3.3270\tabularnewline
ASGNN & 0.5534 & 0.7577 & 0.8323 & 0.8534 & 0.9040 & 1.0273\tabularnewline
\hline 
\end{tabular}}
\end{table}

According to Theorem \ref{ASMP: Convergence Theorem}, the convergence
of ASMP is guaranteed with proper choices of step sizes. Since the
step sizes are set to be learnable parameters, we provide an additional
experiment to evaluate the convergence of ASMP with learned step sizes
empirically in Appendix \ref{sec:Convergence-Property-of}.

\section{Conclusion }

In this work, we have developed an interpretable robust message passing
scheme named ASMP following the jointly node feature learning and
graph structure learning principle. ASMP is provably convergent and
it has a clear interpretation as a standard message passing scheme
with adaptive structure. Integrating ASMP with neural network components,
we have obtained a family of robust graph neural networks with adaptive
structure. Extensive experiments on real-world datasets with various
adversarial attack settings corroborate the effectiveness and the
robustness of the proposed graph neural network architecture. \bibliographystyle{plainnat}
\bibliography{ref}

\begin{thebibliography}{65}
\providecommand{\natexlab}[1]{#1}
\providecommand{\url}[1]{\texttt{#1}}
\expandafter\ifx\csname urlstyle\endcsname\relax
  \providecommand{\doi}[1]{doi: #1}\else
  \providecommand{\doi}{doi: \begingroup \urlstyle{rm}\Url}\fi

\bibitem[Ahn et~al.(2022)Ahn, Yang, Gan, Wipf, and Moon]{ahn2022descent}
Hongjoon Ahn, Youngyi Yang, Quan Gan, David Wipf, and Taesup Moon.
\newblock Descent steps of a relation-aware energy produce heterogeneous graph
  neural networks.
\newblock \emph{arXiv preprint arXiv:2206.11081}, 2022.

\bibitem[Bojchevski and G{\"u}nnemann(2018)]{bojchevski2018deep}
Aleksandar Bojchevski and Stephan G{\"u}nnemann.
\newblock Deep gaussian embedding of graphs: Unsupervised inductive learning
  via ranking.
\newblock In \emph{Proceedings of the International Conference on Learning
  Representations}, 2018.

\bibitem[Bolte et~al.(2014)Bolte, Sabach, and Teboulle]{bolte2014proximal}
J{\'e}r{\^o}me Bolte, Shoham Sabach, and Marc Teboulle.
\newblock Proximal alternating linearized minimization for nonconvex and
  nonsmooth problems.
\newblock \emph{Mathematical Programming}, 146\penalty0 (1):\penalty0 459--494,
  2014.

\bibitem[Bronstein et~al.(2017)Bronstein, Bruna, LeCun, Szlam, and
  Vandergheynst]{bronstein2017geometric}
Michael~M Bronstein, Joan Bruna, Yann LeCun, Arthur Szlam, and Pierre
  Vandergheynst.
\newblock Geometric deep learning: Going beyond euclidean data.
\newblock \emph{IEEE Signal Processing Magazine}, 34\penalty0 (4):\penalty0
  18--42, 2017.

\bibitem[Chen et~al.(2021)Chen, Eldar, and Zhao]{chen2021graph}
Siheng Chen, Yonina~C Eldar, and Lingxiao Zhao.
\newblock Graph unrolling networks: Interpretable neural networks for graph
  signal denoising.
\newblock \emph{IEEE Transactions on Signal Processing}, 69:\penalty0
  3699--3713, 2021.

\bibitem[Chen et~al.(2020)Chen, Wu, and Zaki]{chen2020iterative}
Yu~Chen, Lingfei Wu, and Mohammed Zaki.
\newblock Iterative deep graph learning for graph neural networks: Better and
  robust node embeddings.
\newblock In \emph{Proceedings of Advances in Neural Information Processing
  Systems}, volume~33, pages 19314--19326, 2020.

\bibitem[Chung(1997)]{chung1997spectral}
Fan~RK Chung.
\newblock \emph{Spectral graph theory}, volume~92.
\newblock American Mathematical Society, 1997.

\bibitem[Cong et~al.(2021)Cong, Ramezani, and Mahdavi]{cong2021provable}
Weilin Cong, Morteza Ramezani, and Mehrdad Mahdavi.
\newblock On provable benefits of depth in training graph convolutional
  networks.
\newblock In \emph{Proceedings of Advances in Neural Information Processing
  Systems}, volume~34, pages 9936--9949, 2021.

\bibitem[Dai et~al.(2018)Dai, Li, Tian, Huang, Wang, Zhu, and
  Song]{dai2018adversarial}
Hanjun Dai, Hui Li, Tian Tian, Xin Huang, Lin Wang, Jun Zhu, and Le~Song.
\newblock Adversarial attack on graph structured data.
\newblock In \emph{Proceedings of the International Conference on Machine
  Learning}, pages 1115--1124. PMLR, 2018.

\bibitem[Deng et~al.(2022)Deng, Li, Feng, and Zhang]{deng2022garnet}
Chenhui Deng, Xiuyu Li, Zhuo Feng, and Zhiru Zhang.
\newblock Garnet: Reduced-rank topology learning for robust and scalable graph
  neural networks.
\newblock \emph{arXiv preprint arXiv:2201.12741}, 2022.

\bibitem[Dong et~al.(2016)Dong, Thanou, Frossard, and
  Vandergheynst]{dong2016learning}
Xiaowen Dong, Dorina Thanou, Pascal Frossard, and Pierre Vandergheynst.
\newblock Learning laplacian matrix in smooth graph signal representations.
\newblock \emph{IEEE Transactions on Signal Processing}, 64\penalty0
  (23):\penalty0 6160--6173, 2016.

\bibitem[Dong et~al.(2019)Dong, Thanou, Rabbat, and Frossard]{dong2019learning}
Xiaowen Dong, Dorina Thanou, Michael Rabbat, and Pascal Frossard.
\newblock Learning graphs from data: A signal representation perspective.
\newblock \emph{IEEE Signal Processing Magazine}, 36\penalty0 (3):\penalty0
  44--63, 2019.

\bibitem[Egilmez et~al.(2017)Egilmez, Pavez, and Ortega]{egilmez2017graph}
Hilmi~E Egilmez, Eduardo Pavez, and Antonio Ortega.
\newblock Graph learning from data under laplacian and structural constraints.
\newblock \emph{IEEE Journal of Selected Topics in Signal Processing},
  11\penalty0 (6):\penalty0 825--841, 2017.

\bibitem[Entezari et~al.(2020)Entezari, Al-Sayouri, Darvishzadeh, and
  Papalexakis]{entezari2020all}
Negin Entezari, Saba~A Al-Sayouri, Amirali Darvishzadeh, and Evangelos~E
  Papalexakis.
\newblock All you need is low (rank) defending against adversarial attacks on
  graphs.
\newblock In \emph{Proceedings of the 13th International Conference on Web
  Search and Data Mining}, pages 169--177, 2020.

\bibitem[Fey and Lenssen(2019)]{Fey/Lenssen/2019}
Matthias Fey and Jan~E. Lenssen.
\newblock Fast graph representation learning with {PyTorch Geometric}.
\newblock In \emph{ICLR Workshop on Representation Learning on Graphs and
  Manifolds}, 2019.

\bibitem[Franceschi et~al.(2019)Franceschi, Niepert, Pontil, and
  He]{franceschi2019learning}
Luca Franceschi, Mathias Niepert, Massimiliano Pontil, and Xiao He.
\newblock Learning discrete structures for graph neural networks.
\newblock In \emph{Proceedings of the International Conference on Machine
  Learning}, pages 1972--1982. PMLR, 2019.

\bibitem[Fu et~al.(2022)Fu, Zhao, and Bian]{fu2022p}
Guoji Fu, Peilin Zhao, and Yatao Bian.
\newblock $ p $-{Laplacian} based graph neural networks.
\newblock In \emph{Proceedings of the International Conference on Machine
  Learning}, pages 6878--6917. PMLR, 2022.

\bibitem[Garg et~al.(2020)Garg, Jegelka, and Jaakkola]{garg2020generalization}
Vikas Garg, Stefanie Jegelka, and Tommi Jaakkola.
\newblock Generalization and representational limits of graph neural networks.
\newblock In \emph{Proceedings of the International Conference on Machine
  Learning}, pages 3419--3430. PMLR, 2020.

\bibitem[Geisler et~al.(2020)Geisler, Z{\"u}gner, and
  G{\"u}nnemann]{geisler2020reliable}
Simon Geisler, Daniel Z{\"u}gner, and Stephan G{\"u}nnemann.
\newblock Reliable graph neural networks via robust aggregation.
\newblock In \emph{Proceedings of Advances in Neural Information Processing
  Systems}, volume~33, pages 13272--13284, 2020.

\bibitem[G{\"u}nnemann(2022)]{GNNBook-ch8-gunnemann}
Stephan G{\"u}nnemann.
\newblock Graph neural networks: Adversarial robustness.
\newblock In Lingfei Wu, Peng Cui, Jian Pei, and Liang Zhao, editors,
  \emph{Graph Neural Networks: Foundations, Frontiers, and Applications}, pages
  149--176. Springer Singapore, Singapore, 2022.

\bibitem[Jiang et~al.(2022)Jiang, Han, Fan, Liu, Zou, Mostafavi, and
  Hu]{jiang2022fmp}
Zhimeng Jiang, Xiaotian Han, Chao Fan, Zirui Liu, Na~Zou, Ali Mostafavi, and
  Xia Hu.
\newblock Fmp: Toward fair graph message passing against topology bias.
\newblock \emph{arXiv preprint arXiv:2202.04187}, 2022.

\bibitem[Jin et~al.(2020)Jin, Ma, Liu, Tang, Wang, and Tang]{jin2020graph}
Wei Jin, Yao Ma, Xiaorui Liu, Xianfeng Tang, Suhang Wang, and Jiliang Tang.
\newblock Graph structure learning for robust graph neural networks.
\newblock In \emph{Proceedings of the 26th ACM SIGKDD International Conference
  on Knowledge Discovery \& Data Mining}, pages 66--74, 2020.

\bibitem[Jin et~al.(2021)Jin, Li, Xu, Wang, Ji, Aggarwal, and
  Tang]{jin2021adversarial}
Wei Jin, Yaxing Li, Han Xu, Yiqi Wang, Shuiwang Ji, Charu Aggarwal, and Jiliang
  Tang.
\newblock Adversarial attacks and defenses on graphs: A review, a tool and
  empirical studies.
\newblock \emph{ACM SIGKDD Explorations Newsletter}, 22\penalty0 (2):\penalty0
  19--34, 2021.

\bibitem[Kalofolias(2016)]{kalofolias2016learn}
Vassilis Kalofolias.
\newblock How to learn a graph from smooth signals.
\newblock In \emph{Proceedings of Artificial Intelligence and Statistics},
  pages 920--929. PMLR, 2016.

\bibitem[Kipf and Welling(2017)]{kipf2017semi}
Thomas~N. Kipf and Max Welling.
\newblock Semi-supervised classification with graph convolutional networks.
\newblock In \emph{Proceedings of the International Conference on Learning
  Representations}, 2017.

\bibitem[Klicpera et~al.(2019)Klicpera, Bojchevski, and
  G{\"u}nnemann]{klicpera2019combining}
Johannes Klicpera, Aleksandar Bojchevski, and Stephan G{\"u}nnemann.
\newblock Predict then propagate: Graph neural networks meet personalized
  pagerank.
\newblock In \emph{Proceedings of the International Conference on Learning
  Representations}, 2019.

\bibitem[Kumar et~al.(2020)Kumar, Ying, de~Miranda~Cardoso, and
  Palomar]{kumar2020unified}
Sandeep Kumar, Jiaxi Ying, Jos{\'e}~Vin{\'\i}cius de~Miranda~Cardoso, and
  Daniel~P Palomar.
\newblock A unified framework for structured graph learning via spectral
  constraints.
\newblock \emph{Journal of Machine Learning Research}, 21\penalty0
  (22):\penalty0 1--60, 2020.

\bibitem[Li et~al.(2020)Li, Jin, Xu, and Tang]{li2020deeprobust}
Yaxin Li, Wei Jin, Han Xu, and Jiliang Tang.
\newblock Deeprobust: A pytorch library for adversarial attacks and defenses.
\newblock \emph{arXiv preprint arXiv:2005.06149}, 2020.

\bibitem[Liao et~al.(2021)Liao, Urtasun, and Zemel]{liao2021pac}
Renjie Liao, Raquel Urtasun, and Richard Zemel.
\newblock A pac-bayesian approach to generalization bounds for graph neural
  networks.
\newblock In \emph{Proceedings of the International Conference on Learning
  Representations}, 2021.

\bibitem[Liu et~al.(2021{\natexlab{a}})Liu, Ding, Jin, Xu, Ma, Liu, and
  Tang]{liu2021graph}
Xiaorui Liu, Jiayuan Ding, Wei Jin, Han Xu, Yao Ma, Zitao Liu, and Jiliang
  Tang.
\newblock Graph neural networks with adaptive residual.
\newblock In \emph{Proceedings of Advances in Neural Information Processing
  Systems}, volume~34, pages 9720--9733, 2021{\natexlab{a}}.

\bibitem[Liu et~al.(2021{\natexlab{b}})Liu, Jin, Ma, Li, Liu, Wang, Yan, and
  Tang]{liu2021elastic}
Xiaorui Liu, Wei Jin, Yao Ma, Yaxin Li, Hua Liu, Yiqi Wang, Ming Yan, and
  Jiliang Tang.
\newblock Elastic graph neural networks.
\newblock In \emph{Proceedings of the International Conference on Machine
  Learning}, pages 6837--6849. PMLR, 2021{\natexlab{b}}.

\bibitem[Luo et~al.(2021)Luo, Cheng, Yu, Zong, Ni, Chen, and
  Zhang]{luo2021learning}
Dongsheng Luo, Wei Cheng, Wenchao Yu, Bo~Zong, Jingchao Ni, Haifeng Chen, and
  Xiang Zhang.
\newblock Learning to drop: Robust graph neural network via topological
  denoising.
\newblock In \emph{Proceedings of the 14th ACM International Conference on Web
  Search and Data Mining}, pages 779--787, 2021.

\bibitem[Ma et~al.(2021)Ma, Liu, Zhao, Liu, Tang, and Shah]{ma2021unified}
Yao Ma, Xiaorui Liu, Tong Zhao, Yozen Liu, Jiliang Tang, and Neil Shah.
\newblock A unified view on graph neural networks as graph signal denoising.
\newblock In \emph{Proceedings of the 30th ACM International Conference on
  Information \& Knowledge Management}, pages 1202--1211, 2021.

\bibitem[Newman(2018)]{newman2018networks}
Mark Newman.
\newblock \emph{Networks}.
\newblock Oxford University Press, 2018.

\bibitem[Nikolova and Tan(2017)]{nikolova2017alternating}
Mila Nikolova and Pauline Tan.
\newblock Alternating proximal gradient descent for nonconvex regularised
  problems with multiconvex coupling terms.
\newblock \emph{HAL}, 2017.

\bibitem[Ortega et~al.(2018)Ortega, Frossard, Kova{\v{c}}evi{\'c}, Moura, and
  Vandergheynst]{ortega2018graph}
Antonio Ortega, Pascal Frossard, Jelena Kova{\v{c}}evi{\'c}, Jos{\'e}~MF Moura,
  and Pierre Vandergheynst.
\newblock Graph signal processing: Overview, challenges, and applications.
\newblock \emph{Proceedings of the IEEE}, 106\penalty0 (5):\penalty0 808--828,
  2018.

\bibitem[Pan et~al.(2021)Pan, Song, and Huang]{panunified}
Xuran Pan, Shiji Song, and Gao Huang.
\newblock A unified framework for convolution-based graph neural networks.
\newblock In \emph{URL https://openreview. net/forum}, 2021.

\bibitem[Parikh and Boyd(2014)]{parikh2014proximal}
Neal Parikh and Stephen Boyd.
\newblock Proximal algorithms.
\newblock \emph{Foundations and Trends in Optimization}, 1\penalty0
  (3):\penalty0 127--239, 2014.

\bibitem[Pu et~al.(2021)Pu, Cao, Zhang, Dong, and Chen]{pu2021learning}
Xingyue Pu, Tianyue Cao, Xiaoyun Zhang, Xiaowen Dong, and Siheng Chen.
\newblock Learning to learn graph topologies.
\newblock In \emph{Proceedings of Advances in Neural Information Processing
  Systems}, volume~34, pages 4249--4262, 2021.

\bibitem[Sen et~al.(2008)Sen, Namata, Bilgic, Getoor, Galligher, and
  Eliassi-Rad]{sen2008collective}
Prithviraj Sen, Galileo Namata, Mustafa Bilgic, Lise Getoor, Brian Galligher,
  and Tina Eliassi-Rad.
\newblock Collective classification in network data.
\newblock \emph{AI Magazine}, 29\penalty0 (3):\penalty0 93--93, 2008.

\bibitem[Sun et~al.(2018)Sun, Dou, Yang, Wang, Yu, He, and
  Li]{sun2018adversarial}
Lichao Sun, Yingtong Dou, Carl Yang, Ji~Wang, Philip~S Yu, Lifang He, and
  Bo~Li.
\newblock Adversarial attack and defense on graph data: A survey.
\newblock \emph{arXiv preprint arXiv:1812.10528}, 2018.

\bibitem[Tang et~al.(2020)Tang, Li, Sun, Yao, Mitra, and
  Wang]{tang2020transferring}
Xianfeng Tang, Yandong Li, Yiwei Sun, Huaxiu Yao, Prasenjit Mitra, and Suhang
  Wang.
\newblock Transferring robustness for graph neural network against poisoning
  attacks.
\newblock In \emph{Proceedings of the 13th International Conference on Web
  Search and Data Mining}, pages 600--608, 2020.

\bibitem[Veli{\v{c}}kovi{\'c} et~al.(2018)Veli{\v{c}}kovi{\'c}, Cucurull,
  Casanova, Romero, Li{\`o}, and Bengio]{velivckovic2018graph}
Petar Veli{\v{c}}kovi{\'c}, Guillem Cucurull, Arantxa Casanova, Adriana Romero,
  Pietro Li{\`o}, and Yoshua Bengio.
\newblock Graph attention networks.
\newblock In \emph{Proceedings of the International Conference on Learning
  Representations}, 2018.

\bibitem[Wang and Gong(2019)]{wang2019attacking}
Binghui Wang and Neil~Zhenqiang Gong.
\newblock Attacking graph-based classification via manipulating the graph
  structure.
\newblock In \emph{Proceedings of the ACM SIGSAC Conference on Computer and
  Communications Security}, pages 2023--2040, 2019.

\bibitem[Wang et~al.(2019)Wang, Ji, Shi, Wang, Ye, Cui, and
  Yu]{wang2019heterogeneous}
Xiao Wang, Houye Ji, Chuan Shi, Bai Wang, Yanfang Ye, Peng Cui, and Philip~S
  Yu.
\newblock Heterogeneous graph attention network.
\newblock In \emph{Proceedings of the World Wide Web Conference}, pages
  2022--2032, 2019.

\bibitem[Wang et~al.(2020)Wang, Liu, Yoon, Lamba, Wang, Faloutsos, and
  Hooi]{wang2020provably}
Yiwei Wang, Shenghua Liu, Minji Yoon, Hemank Lamba, Wei Wang, Christos
  Faloutsos, and Bryan Hooi.
\newblock Provably robust node classification via low-pass message passing.
\newblock In \emph{Proceedings of the IEEE International Conference on Data
  Mining}, pages 621--630. IEEE, 2020.

\bibitem[Wang et~al.(2015)Wang, Sharpnack, Smola, and
  Tibshirani]{wang2015trend}
Yu-Xiang Wang, James Sharpnack, Alex Smola, and Ryan Tibshirani.
\newblock Trend filtering on graphs.
\newblock In \emph{Proceedings of Artificial Intelligence and Statistics},
  pages 1042--1050. PMLR, 2015.

\bibitem[Wu et~al.(2019{\natexlab{a}})Wu, Souza, Zhang, Fifty, Yu, and
  Weinberger]{wu2019simplifying}
Felix Wu, Amauri Souza, Tianyi Zhang, Christopher Fifty, Tao Yu, and Kilian
  Weinberger.
\newblock Simplifying graph convolutional networks.
\newblock In \emph{Proceedings of the International Conference on Machine
  Learning}, pages 6861--6871. PMLR, 2019{\natexlab{a}}.

\bibitem[Wu et~al.(2019{\natexlab{b}})Wu, Wang, Tyshetskiy, Docherty, Lu, and
  Zhu]{wu2019adversarial}
Huijun Wu, Chen Wang, Yuriy Tyshetskiy, Andrew Docherty, Kai Lu, and Liming
  Zhu.
\newblock Adversarial examples for graph data: deep insights into attack and
  defense.
\newblock In \emph{Proceedings of the 28th International Joint Conference on
  Artificial Intelligence}, pages 4816--4823, 2019{\natexlab{b}}.

\bibitem[Wu et~al.(2022)Wu, Cui, Pei, and Zhao]{GNNBook2022}
Lingfei Wu, Peng Cui, Jian Pei, and Liang Zhao.
\newblock \emph{Graph Neural Networks: Foundations, Frontiers, and
  Applications}.
\newblock Springer Singapore, Singapore, 2022.

\bibitem[Wu et~al.(2020)Wu, Pan, Chen, Long, Zhang, and
  Philip]{wu2020comprehensive}
Zonghan Wu, Shirui Pan, Fengwen Chen, Guodong Long, Chengqi Zhang, and S~Yu
  Philip.
\newblock A comprehensive survey on graph neural networks.
\newblock \emph{IEEE Transactions on Neural Networks and Learning Systems},
  32\penalty0 (1):\penalty0 4--24, 2020.

\bibitem[Yang et~al.(2021)Yang, Liu, Wang, Zhou, Gan, Wei, Zhang, Huang, and
  Wipf]{yang2021graph}
Yongyi Yang, Tang Liu, Yangkun Wang, Jinjing Zhou, Quan Gan, Zhewei Wei, Zheng
  Zhang, Zengfeng Huang, and David Wipf.
\newblock Graph neural networks inspired by classical iterative algorithms.
\newblock In \emph{Proceedings of the International Conference on Machine
  Learning}, pages 11773--11783. PMLR, 2021.

\bibitem[Yu et~al.(2020)Yu, Zhang, Jiang, Wu, and Yang]{yu2020graph}
Donghan Yu, Ruohong Zhang, Zhengbao Jiang, Yuexin Wu, and Yiming Yang.
\newblock Graph-revised convolutional network.
\newblock In \emph{Proceedings of the Joint European Conference on Machine
  Learning and Knowledge Discovery in Databases}, pages 378--393. Springer,
  2020.

\bibitem[Zhang et~al.(2020)Zhang, Yan, Xie, Xia, and
  Zhang]{zhang2020revisiting}
Hongwei Zhang, Tijin Yan, Zenjun Xie, Yuanqing Xia, and Yuan Zhang.
\newblock Revisiting graph convolutional network on semi-supervised node
  classification from an optimization perspective.
\newblock \emph{arXiv preprint arXiv:2009.11469}, 2020.

\bibitem[Zhang and Lu(2020)]{zhang2020feature}
Li~Zhang and Haiping Lu.
\newblock A feature-importance-aware and robust aggregator for gcn.
\newblock In \emph{Proceedings of the 29th ACM International Conference on
  Information \& Knowledge Management}, pages 1813--1822, 2020.

\bibitem[Zhang and Zitnik(2020)]{zhang2020gnnguard}
Xiang Zhang and Marinka Zitnik.
\newblock Gnnguard: Defending graph neural networks against adversarial
  attacks.
\newblock In \emph{Proceedings of Advances in Neural Information Processing
  Systems}, volume~33, pages 9263--9275, 2020.

\bibitem[Zhang and Zhao(2022)]{zhang2022towards}
Zepeng Zhang and Ziping Zhao.
\newblock Towards understanding graph neural networks: An algorithm unrolling
  perspective.
\newblock \emph{arXiv preprint arXiv:2206.04471}, 2022.

\bibitem[Zhao and Akoglu(2020)]{Model:PairNorm}
Lingxiao Zhao and Leman Akoglu.
\newblock {PairNorm}: Tackling oversmoothing in {GNN}s.
\newblock In \emph{Proceedings of the International Conference on Learning
  Representations}, 2020.

\bibitem[Zhou et~al.(2020)Zhou, Cui, Hu, Zhang, Yang, Liu, Wang, Li, and
  Sun]{zhou2020graph}
Jie Zhou, Ganqu Cui, Shengding Hu, Zhengyan Zhang, Cheng Yang, Zhiyuan Liu,
  Lifeng Wang, Changcheng Li, and Maosong Sun.
\newblock Graph neural networks: A review of methods and applications.
\newblock \emph{AI Open}, 1:\penalty0 57--81, 2020.

\bibitem[Zhou et~al.(2013)Zhou, Zha, and Song]{zhou2013learning}
Ke~Zhou, Hongyuan Zha, and Le~Song.
\newblock Learning social infectivity in sparse low-rank networks using
  multi-dimensional hawkes processes.
\newblock In \emph{Proceedings of Artificial Intelligence and Statistics},
  pages 641--649. PMLR, 2013.

\bibitem[Zhu et~al.(2019)Zhu, Zhang, Cui, and Zhu]{zhu2019robust}
Dingyuan Zhu, Ziwei Zhang, Peng Cui, and Wenwu Zhu.
\newblock Robust graph convolutional networks against adversarial attacks.
\newblock In \emph{Proceedings of the 25th ACM SIGKDD International Conference
  on Knowledge Discovery \& Data Mining}, pages 1399--1407, 2019.

\bibitem[Zhu et~al.(2021)Zhu, Wang, Shi, Ji, and Cui]{zhu2021interpreting}
Meiqi Zhu, Xiao Wang, Chuan Shi, Houye Ji, and Peng Cui.
\newblock Interpreting and unifying graph neural networks with an optimization
  framework.
\newblock In \emph{Proceedings of the Web Conference 2021}, pages 1215--1226,
  2021.

\bibitem[Zhu et~al.(2022)Zhu, Xu, Zhang, Liu, Wu, and Wang]{zhu2021deep}
Yanqiao Zhu, Weizhi Xu, Jinghao Zhang, Qiang Liu, Shu Wu, and Liang Wang.
\newblock A survey on graph structure learning: Progress and opportunities.
\newblock \emph{arXiv preprint arXiv:2103.03036}, 2022.

\bibitem[Z{\"u}gner and G{\"u}nnemann(2019)]{zugner2019adversarial}
Daniel Z{\"u}gner and Stephan G{\"u}nnemann.
\newblock Adversarial attacks on graph neural networks via meta learning.
\newblock In \emph{Proceedings of the International Conference on Learning
  Representations}, 2019.

\bibitem[Z{\"u}gner et~al.(2018)Z{\"u}gner, Akbarnejad, and
  G{\"u}nnemann]{zugner2018adversarial}
Daniel Z{\"u}gner, Amir Akbarnejad, and Stephan G{\"u}nnemann.
\newblock Adversarial attacks on neural networks for graph data.
\newblock In \emph{Proceedings of the 24th ACM SIGKDD International Conference
  on Knowledge Discovery \& Data Mining}, pages 2847--2856, 2018.

\end{thebibliography}
 \newpage{}

\appendix

\section{Related Work on Optimization-Induced Graph Neural Network Design\label{Appendix:Related-Work}}

Since ASGNN proposed in this paper is induced from an optimization
algorithm, in this section, we give more literatue review on optimization-induced
GNN model design to supplement our discussion.

The idea of optimization-induced GNN model design partly stems from
the observation that many primitive handcrafted GNN models could be
nicely interpreted as (unrolled) iterative algorithms for solving
a GSD optimization problem \citep{ma2021unified,zhu2021interpreting,zhang2022towards}.
Based on this observation, many papers aim at strengthening the capability
of GNNs by carefully designing the underlying optimization problems
and/or the iterative algorithms solving it. 

For example, inspired by the idea of trend filtering \citep{wang2015trend},
\citet{liu2021elastic} replace the Laplacian smoothing term (which
is in the form of $\ell_{2}$-norm) in the GSD problem with an $\ell_{2,1}$-norm
to promote robustness against abnormal edges. Also for robustness
pursuit, \citet{yang2021graph} replace the Laplacian smoothing term
with nonlinear functions imposed over pairwise node distances. Since
the classical Laplacian smoothing term in GSD only promotes smoothness
over connected nodes, the authors in \citet{zhang2020revisiting,Model:PairNorm}
further suggest promoting the non-smoothness over the disconnected
nodes, which is achieved by deducting the sum of distances between
disconnected pairs of nodes from the denoising objective. In \citet{jiang2022fmp},
the authors augment the GSD objective with a fairness term to fight
against large topology bias. Most recently, \citet{fu2022p} propose
a $p$-Laplacian message passing scheme and a $^{p}$GNN model, which
is capable of dealing with heterophilic graphs and is robust to adversarial
perturbations. Apart from that, \citet{ahn2022descent} designs a
novel regularization term to build heterogeneous GNNs. 

Although there is rich literature on optimization-induced GNN model
design, all of them are focusing on learning the node feature matrix.
The idea of this paper is similar to them in terms of the GNN design
philosophy, however, we design an objective to jointly learn the node
feature and the graph structure which was rarely covered in the literature.

\section{Derivation of The Proximal Gradient Step in Eq. \eqref{ASMP: ProxGD Step}
\label{Appendix: Derivation of ProxGD in ASMP} }

For the $\mathbf{S}$-block optimization, i.e., Eq. \eqref{eq:S-objective},
we define the objective function except the term $\mu_{1}\left\Vert \mathbf{S}\right\Vert _{1}$
as $f_{S}(\mathbf{S})$, i.e.,
\begin{equation}
f_{S}(\mathbf{S})=\gamma\|\mathbf{S}-\mathbf{A}\|_{{\rm F}}^{2}-\lambda\mathrm{Tr}\left(\mathbf{H}^{\top}\mathbf{D}^{-1}\mathbf{S}\mathbf{H}\right)+\mu_{2}\left\Vert \mathbf{S}\right\Vert _{{\rm F}}^{2},\label{Appendix: Smooth Part of F_S}
\end{equation}
where $\mathbf{D}=\mathrm{Diag}\left(\mathbf{S}\mathbf{1}\right)$.
In this section, we first derive the expression of $\nabla f_{S}(\mathbf{S})$
and then compute the proximal operator in Eq. \eqref{ASMP: ProxGD Step}.

\subsection{On Computation of $\nabla f_{S}(\mathbf{S})$}

We first focus on the gradient computation of the second term in Eq.
\eqref{Appendix: Smooth Part of F_S}. For the graph degree matrix,
we have
\[
\mathbf{D}=\mathrm{Diag}\left(\mathbf{S}\mathbf{1}\right)=\mathbf{S}\mathbf{1}\mathbf{1}^{\top}\odot\mathbf{I},
\]
where $\odot$ denotes the Hadamard product. Based on the rule of
matrix calculus, the differential of the scalar function $\mathrm{Tr}(\mathbf{H}^{\top}\mathbf{D}^{-1}\mathbf{S}\mathbf{H})$
with respect to matrix variable $\mathbf{S}$ can be computed as follows:
\[
\mathrm{d}\left(\mathrm{Tr}\left(\mathbf{H}^{\top}\mathbf{D}^{-1}\mathbf{S}\mathbf{H}\right)\right)=\mathrm{Tr}\left(\mathbf{H}^{\top}\mathbf{D}^{-1}\mathrm{d}\left(\mathbf{S}\right)\mathbf{H}+\mathbf{H}^{\top}\mathrm{d}\left(\mathbf{D}^{-1}\right)\mathbf{S}\mathbf{H}\right).
\]
For an invertible $\mathbf{D}$ (note that, in this paper, the graphs
considered all have self loops, so $\mathbf{D}$ is always invertible),
we have
\[
\mathrm{d}\bigl(\mathbf{D}^{-1}\bigr)=-\mathbf{D}^{-1}\mathrm{d}\left(\mathbf{D}\right)\mathbf{D}^{-1}.
\]
Thus, we can get
\begin{align*}
\mathrm{d}\left(\mathrm{Tr}\left(\mathbf{H}^{\top}\mathbf{D}^{-1}\mathbf{S}\mathbf{H}\right)\right)= & \mathrm{Tr}\left(\mathbf{H}^{\top}\mathbf{D}^{-1}\mathrm{d}\left(\mathbf{S}\right)\mathbf{H}-\mathbf{H}^{\top}\mathbf{D}^{-1}\mathrm{d}\bigl(\mathbf{S}\mathbf{1}\mathbf{1}^{\top}\odot\mathbf{I}\bigr)\mathbf{D}^{-1}\mathbf{S}\mathbf{H}\right)\\
= & \mathrm{Tr}\left(\mathbf{H}\mathbf{H}^{\top}\mathbf{D}^{-1}\mathrm{d}\left(\mathbf{S}\right)-\left(\mathbf{D}^{-1}\mathbf{S}\mathbf{H}\mathbf{H}^{\top}\mathbf{D}^{-1}\odot\mathbf{I}\right)\mathrm{d}\left(\mathbf{S}\right)\mathbf{1}\mathbf{1}^{\top}\right)\\
= & \mathrm{Tr}\left(\Bigl(\mathbf{H}\mathbf{H}^{\top}\mathbf{D}^{-1}-\mathbf{1}\mathrm{Diag}\left(\mathbf{D}^{-1}\mathbf{S}\mathbf{H}\mathbf{H}^{\top}\mathbf{D}^{-1}\right)^{\top}\Bigr)\mathrm{d}\left(\mathbf{S}\right)\right).
\end{align*}
Since $\mathrm{d}\left(\mathrm{Tr}(\mathbf{H}^{\top}\mathbf{D}^{-1}\mathbf{S}\mathbf{H})\right)=\mathrm{Tr}\Bigl(\bigl(\frac{\mathrm{d}\mathrm{Tr}(\mathbf{H}^{\top}\mathbf{D}^{-1}\mathbf{S}\mathbf{H})}{\mathrm{d}\mathbf{S}}\bigr)^{\top}\mathrm{d}(\mathbf{S})\Bigr)$,
we have
\begin{align*}
\frac{\mathrm{d}\mathrm{Tr}\left(\mathbf{H}^{\top}\mathbf{D}^{-1}\mathbf{S}\mathbf{H}\right)}{\mathrm{d}\mathbf{S}} & =\mathbf{D}^{-1}\mathbf{H}\mathbf{H}^{\top}-\mathrm{Diag}\left(\mathbf{D}^{-1}\mathbf{S}\mathbf{H}\mathbf{H}^{\top}\mathbf{D}^{-1}\right)\mathbf{1}^{\top}.
\end{align*}
For other terms in $f_{S}(\mathbf{S})$, the gradients with respect
to $\mathbf{S}$ can be easily computed. Finally, we obtain
\[
\nabla f_{S}(\mathbf{S})=2\gamma\left(\mathbf{S}-\mathbf{A}\right)-\lambda\left(\mathbf{D}^{-1}\mathbf{H}\mathbf{H}^{\top}-\mathrm{Diag}\left(\mathbf{D}^{-1}\mathbf{S}\mathbf{H}\mathbf{H}^{\top}\mathbf{D}^{-1}\right)\mathbf{1}^{\top}\right)+2\mu_{2}\mathbf{S}.
\]

\subsection{On Computation of The Proximal Step}
\begin{lem}
\label{lem:The-proximal}Given a matrix $\mathbf{M}\in\mathbb{R}^{N\times N}$,
we have
\begin{equation}
\mathrm{prox}_{\kappa\left\Vert \cdot\right\Vert _{1}+\mathbb{I}_{\mathcal{S}}\left(\cdot\right)}\left(\mathbf{M}\right)=\min\left\{ 1,\mathrm{ReLU}\bigl(\mathbf{M}-\kappa\mathbf{1}\mathbf{1}^{\top}\bigr)\right\} ,\label{eq:prox-solution}
\end{equation}
where $\mathrm{ReLU}(\mathbf{X})=\max\{\mathbf{0},\mathbf{X}\}$.
\end{lem}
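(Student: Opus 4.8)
The plan is to exploit the entrywise separability of the proximal subproblem. Writing out the definition with the usual convention \citep{parikh2014proximal},
\[
\mathrm{prox}_{\kappa\left\Vert \cdot\right\Vert _{1}+\mathbb{I}_{\mathcal{S}}\left(\cdot\right)}\left(\mathbf{M}\right)=\underset{\mathbf{S}\in\mathbb{R}^{N\times N}}{\arg\min}\ \Bigl\{\kappa\left\Vert \mathbf{S}\right\Vert _{1}+\mathbb{I}_{\mathcal{S}}\left(\mathbf{S}\right)+\tfrac{1}{2}\left\Vert \mathbf{S}-\mathbf{M}\right\Vert _{{\rm F}}^{2}\Bigr\},
\]
and using $\left\Vert \mathbf{S}\right\Vert _{1}=\sum_{i,j}\left|\mathbf{S}_{ij}\right|$, $\mathbb{I}_{\mathcal{S}}(\mathbf{S})=\sum_{i,j}\mathbb{I}_{[0,1]}(\mathbf{S}_{ij})$, and $\left\Vert \mathbf{S}-\mathbf{M}\right\Vert _{{\rm F}}^{2}=\sum_{i,j}(\mathbf{S}_{ij}-\mathbf{M}_{ij})^{2}$, the objective splits into $N^{2}$ independent scalar problems. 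First I would observe that on the feasible interval $[0,1]$ one has $\left|s\right|=s$, so the $(i,j)$-th scalar problem is $\min_{s\in[0,1]}\ \kappa s+\tfrac{1}{2}(s-\mathbf{M}_{ij})^{2}$. This sign reduction is the one place where the box constraint and the $\ell_{1}$ term interact, and it is exactly what linearizes the nonsmooth term; I expect this to be the only point that requires any care.

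Next I would solve the scalar problem. The function $\phi(s):=\kappa s+\tfrac{1}{2}(s-\mathbf{M}_{ij})^{2}$ is strongly convex with derivative $\phi'(s)=s-(\mathbf{M}_{ij}-\kappa)$, hence unique unconstrained minimizer $\mathbf{M}_{ij}-\kappa$. Since $\phi'$ is increasing, the minimizer over $[0,1]$ is the Euclidean projection of $\mathbf{M}_{ij}-\kappa$ onto $[0,1]$ (one can verify the first-order optimality condition directly, or invoke the standard projection characterization for a strictly convex quadratic on an interval), i.e.
\[
s^{\star}=\min\bigl\{1,\max\{0,\mathbf{M}_{ij}-\kappa\}\bigr\}=\min\bigl\{1,\mathrm{ReLU}(\mathbf{M}_{ij}-\kappa)\bigr\}.
\]

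Finally I would reassemble the entrywise minimizers into a matrix, noting that $\mathbf{M}_{ij}-\kappa=\bigl[\mathbf{M}-\kappa\mathbf{1}\mathbf{1}^{\top}\bigr]_{ij}$ and that $\min\{1,\cdot\}$ and $\mathrm{ReLU}(\cdot)$ act componentwise, which yields $\mathrm{prox}_{\kappa\left\Vert \cdot\right\Vert _{1}+\mathbb{I}_{\mathcal{S}}\left(\cdot\right)}\left(\mathbf{M}\right)=\min\{1,\mathrm{ReLU}(\mathbf{M}-\kappa\mathbf{1}\mathbf{1}^{\top})\}$, i.e. Eq. \eqref{eq:prox-solution}; uniqueness of the $\arg\min$ (so that the proximal operator is genuinely single-valued) follows from strong convexity of the full objective. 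There is no substantive obstacle here: once the separation and the $|s|=s$ simplification on $[0,1]$ are in place, the remainder is the elementary projection computation.
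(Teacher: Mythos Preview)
Your proof is correct and follows the same overall strategy as the paper: reduce the proximal map to $N^{2}$ independent scalar problems via separability, solve each, and reassemble. The one difference is in how the scalar problem is dispatched: you first observe that $|s|=s$ on $[0,1]$, reducing to a smooth strongly convex quadratic whose constrained minimizer is simply the projection of $\mathbf{M}_{ij}-\kappa$ onto $[0,1]$; the paper instead keeps the nonsmooth term $\kappa|\mathbf{S}_{ij}|$ and verifies optimality of the claimed formula by a three-case subgradient/variational-inequality check. Your route is marginally more elementary, but the two arguments are equivalent in substance.
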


\begin{proof}
The proximal step in Eq. \eqref{eq:prox-solution} can be rewritten
as the following optimization problem:
\begin{equation}
\underset{\mathbf{S}\in\mathcal{S}}{\mathsf{minimize}}\ \ \ \frac{1}{2}\left\Vert \mathbf{S}-\mathbf{M}\right\Vert _{{\rm F}}^{2}+\kappa\left\Vert \mathbf{S}\right\Vert _{1}.\label{eq:prox-prob}
\end{equation}
It is easy to observe that Problem \eqref{eq:prox-prob} is decoupled
over different elements in matrix $\mathbf{S}$. Therefore, each $\mathbf{S}_{ij}$
with $i,j=1,\ldots,N$ can be optimized individually by solving the
following optimization problem:
\begin{equation}
\underset{0\leq\mathbf{S}_{ij}\leq1}{\mathsf{minimize}}\ \ \ h_{ij}(\mathbf{S}_{ij})=\frac{1}{2}\left\Vert \mathbf{S}_{ij}-\mathbf{M}_{ij}\right\Vert _{{\rm F}}^{2}+\kappa\left|\mathbf{S}_{ij}\right|.\label{eq:prox-prob-element}
\end{equation}
According to Eq. \eqref{eq:prox-solution}, we have
\[
\begin{aligned}\mathbf{S}_{ij}^{\star} & =\mathsf{min}\left\{ 1,\mathrm{ReLU}\left(\mathbf{M}_{ij}-\kappa\right)\right\} \\
 & =\begin{cases}
1 & 1+\kappa\leq\mathbf{M}_{ij}\\
\mathbf{M}_{ij}-\kappa & \kappa\leq\mathbf{M}_{ij}<1+\kappa\\
0 & \mathbf{M}_{ij}<\kappa.
\end{cases}
\end{aligned}
\]
Then, Lemma \ref{lem:The-proximal} can be proved by showing that
for $i,$ $j=1,\ldots N$, $\mathbf{S}_{ij}^{\star}$ is the optimal
solution for Problem \eqref{eq:prox-prob-element}. The optimality
of $\mathbf{S}_{ij}^{\star}$ can be validated by verifying the optimality
condition, i.e., there exists a subgradient $\psi\in\partial h_{ij}\bigl(\mathbf{S}_{ij}^{\star}\bigr)$
such that
\[
\psi\bigl(\mathbf{S}_{ij}-\mathbf{S}_{ij}^{\star}\bigr)\geq0
\]
for all $0\leq\mathbf{S}_{ij}\leq1$. Observe that the subdifferential
of $h_{ij}(\mathbf{S}_{ij})$ is computed as follows: 
\[
\partial h_{ij}(\mathbf{S}_{ij})=\begin{cases}
\mathbf{S}_{ij}-\mathbf{M}_{ij}+\kappa & \mathbf{S}_{ij}>0\\
\mathbf{S}_{ij}-\mathbf{M}_{ij}+\kappa\epsilon & \mathbf{S}_{ij}=0,
\end{cases}
\]
where $\epsilon$ can be any constant satisfying $-1\leq\epsilon\leq1$.
Then, the subdifferential $\partial h\bigl(\mathbf{S}_{ij}^{\star}\bigr)$
is given by
\[
\partial h_{ij}(\mathbf{S}_{ij}^{\star})=\begin{cases}
1-\mathbf{M}_{ij}+\kappa & 1+\kappa\leq\mathbf{M}_{ij}\\
0 & \kappa\leq\mathbf{M}_{ij}<1+\kappa\\
-\mathbf{M}_{ij}+\kappa\epsilon & \mathbf{M}_{ij}<\kappa.
\end{cases}
\]
In the following, we will show that the optimality condition holds
for each of the above cases.
\begin{enumerate}
\item For $1+\kappa\leq\mathbf{M}_{ij}$, we have $\mathbf{S}_{ij}^{\star}=1$
and $\psi=1-\mathbf{M}_{ij}+\kappa\leq0$. Since $\mathbf{S}_{ij}\leq1$,
we can get $\psi(\mathbf{S}_{ij}-\mathbf{S}_{ij}^{\star})\geq0$.
\item For $\kappa\leq\mathbf{M}_{ij}<1+\kappa$, we have $\psi=0$ and hence,
$\psi(\mathbf{S}_{ij}-\mathbf{S}_{ij}^{\star})=0$ for all $0\leq\mathbf{S}_{ij}\leq1$.
\item For $\mathbf{M}_{ij}<\kappa$, we have $\mathbf{S}_{ij}^{\star}=0$
and $\psi=-\mathbf{M}_{ij}+\kappa\epsilon$ with $\epsilon$ being
any constant satisfying $-1\leq\epsilon\leq1$. Thus, we can choose
$\epsilon=1$, which leads to $\psi>0$. Since $\mathbf{S}_{ij}\geq0$,
we can get $\psi(\mathbf{S}_{ij}-\mathbf{S}_{ij}^{\star})\geq0$.
\end{enumerate}
In conclusion, there exists a subgradient $\psi\in\partial h_{ij}(\mathbf{S}_{ij}^{\star})$
such that $\psi(\mathbf{S}_{ij}-\mathbf{S}_{ij}^{\star})\geq0$ for
all $0\leq\mathbf{S}_{ij}\leq1$, based on which the optimality of
Eq. \eqref{eq:prox-solution} is validated and the proof is completed.
\end{proof}
Based on the result in Lemma \ref{lem:The-proximal}, by choosing
$\mathbf{M}=\mathbf{S}^{(k)}-\eta_{2}\nabla f_{S}(\mathbf{S}^{(k)})$
and $\kappa=\eta_{2}\mu_{1}$, we get the analytical expression for
the proximal step in Eq. \eqref{ASMP: ProxGD Step} as follows:
\[
\mathbf{S}^{(k+1)}=\min\left\{ 1,\mathrm{ReLU}\Bigl(\mathbf{S}^{(k)}-\eta_{2}\nabla f_{S}(\mathbf{S}^{(k)})-\eta_{2}\mu_{1}\mathbf{1}\mathbf{1}^{\top}\Bigr)\right\} ,
\]
which suffices to perform the soft-thresholding operation and then
project the solution onto the constraint set $\mathcal{S}$.

\section{Proof of Theorem \ref{ASMP: Convergence Theorem} (Convergence of
ASMP) \label{Appendix: Convergence Proof}}

In this section, we will first prove that the objective function at
the $\mathbf{H}$-block optimization problem and the smooth part of
the objective function at the $\mathbf{S}$-block optimization problem
are $L$-smooth. Then we give the conditions to ensure convergence
of ASMP. 

Denote $f_{H}\left(\mathbf{H}\right)$ as the objective function at
the $\mathbf{H}$-block optimization problem, i.e., 
\[
f_{H}\left(\mathbf{H}\right)=\|\mathbf{H}-\mathbf{X}\|_{{\rm F}}^{2}+\lambda\mathrm{Tr}\left(\mathbf{H}^{\top}\mathbf{L}_{{\rm rw}}\mathbf{H}\right).
\]
The $L$-smoothness of $f_{H}\left(\mathbf{H}\right)$ is demonstrated
in the following lemma.
\begin{lem}
\label{lem:L-smooth-H}Function $f_{H}(\mathbf{H})$ is $L$-smooth
with $L_{H}=2+4\lambda$, i.e., for any $\mathbf{H}_{1},\mathbf{H}_{2}\in\mathbb{R}^{N\times M}$,
the following inequality holds:
\[
\left\Vert \nabla f_{H}(\mathbf{H}_{1})-\nabla f_{H}(\mathbf{H}_{2})\right\Vert _{{\rm F}}\leq L_{H}\left\Vert \mathbf{H}_{1}-\mathbf{H}_{2}\right\Vert _{{\rm F}}.
\]
\end{lem}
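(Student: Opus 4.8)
The plan is to use that $f_{H}$ is a convex quadratic in $\mathbf{H}$, so its gradient is affine and the associated ``Hessian'' is a fixed linear operator whose operator norm is exactly the best Lipschitz constant. First I would compute the gradient: the Frobenius fidelity term contributes $2(\mathbf{H}-\mathbf{X})$, and differentiating the smoothing term $\lambda\,\mathrm{Tr}(\mathbf{H}^{\top}\mathbf{L}_{\rm rw}\mathbf{H})$ gives $2\lambda\mathbf{L}_{\rm rw}\mathbf{H}$ (exactly as already used when deriving the $\mathbf{H}$-update in Section~3.2), so that
\[
\nabla f_{H}(\mathbf{H})=2(\mathbf{H}-\mathbf{X})+2\lambda\mathbf{L}_{\rm rw}\mathbf{H}.
\]
Then for any $\mathbf{H}_{1},\mathbf{H}_{2}\in\mathbb{R}^{N\times M}$ the constant term $-2\mathbf{X}$ cancels and
\[
\nabla f_{H}(\mathbf{H}_{1})-\nabla f_{H}(\mathbf{H}_{2})=\bigl(2\mathbf{I}+2\lambda\mathbf{L}_{\rm rw}\bigr)(\mathbf{H}_{1}-\mathbf{H}_{2}).
\]

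Next I would pass to Frobenius norms, using $\|\mathbf{M}\mathbf{N}\|_{\rm F}\le\|\mathbf{M}\|_{2}\|\mathbf{N}\|_{\rm F}$ together with the triangle inequality, to get
\[
\|\nabla f_{H}(\mathbf{H}_{1})-\nabla f_{H}(\mathbf{H}_{2})\|_{\rm F}\le\|2\mathbf{I}+2\lambda\mathbf{L}_{\rm rw}\|_{2}\,\|\mathbf{H}_{1}-\mathbf{H}_{2}\|_{\rm F}\le\bigl(2+2\lambda\|\mathbf{L}_{\rm rw}\|_{2}\bigr)\|\mathbf{H}_{1}-\mathbf{H}_{2}\|_{\rm F}.
\]
Thus the lemma reduces entirely to the spectral estimate $\|\mathbf{L}_{\rm rw}\|_{2}\le2$, after which $L_{H}=2+4\lambda$ is immediate (I would implicitly take $\lambda\ge0$ here as in the convergence analysis; otherwise $L_{H}=2+4|\lambda|$).

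For that estimate I would write $\mathbf{L}_{\rm rw}=\mathbf{I}-\mathbf{D}^{-1}\mathbf{S}$ with $\mathbf{D}=\mathrm{Diag}(\mathbf{S}\mathbf{1})$, which is invertible under Assumption~\ref{assu:degree matrix}, so that $\|\mathbf{L}_{\rm rw}\|_{2}\le1+\|\mathbf{D}^{-1}\mathbf{S}\|_{2}$, and it suffices to show $\|\mathbf{D}^{-1}\mathbf{S}\|_{2}\le1$. The key observation is that $\mathbf{D}^{-1}\mathbf{S}$ is the random-walk normalized adjacency matrix: it is nonnegative with unit row sums, and it is similar, via the diagonal transform $\mathbf{S}\mapsto\mathbf{D}^{1/2}\mathbf{S}\mathbf{D}^{-1/2}$, to the symmetric normalized adjacency $\mathbf{D}^{-1/2}\mathbf{S}\mathbf{D}^{-1/2}$, whose eigenvalues are real and lie in $[-1,1]$ (the standard spectral fact that makes $\mathbf{L}_{\rm sym}$, and hence $\mathbf{L}_{\rm rw}$, have spectrum in $[0,2]$). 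I expect this to be the only genuine obstacle: one must be careful to obtain a dimension-free bound on the norm of the (non-symmetric) operator $2\mathbf{I}+2\lambda\mathbf{L}_{\rm rw}$ via these stochasticity/spectral properties, rather than a crude entrywise estimate that would scale with $N$ or with the spread of the node degrees. Combining the three displays gives $\|\nabla f_{H}(\mathbf{H}_{1})-\nabla f_{H}(\mathbf{H}_{2})\|_{\rm F}\le(2+4\lambda)\|\mathbf{H}_{1}-\mathbf{H}_{2}\|_{\rm F}$, which is exactly the claimed $L$-smoothness with $L_{H}=2+4\lambda$.
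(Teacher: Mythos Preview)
Your proposal is essentially identical to the paper's proof: compute $\nabla f_H(\mathbf{H})=2(\mathbf{H}-\mathbf{X})+2\lambda\mathbf{L}_{\rm rw}\mathbf{H}$, take the difference to get $2(\mathbf{I}+\lambda\mathbf{L}_{\rm rw})(\mathbf{H}_1-\mathbf{H}_2)$, bound via $\|\mathbf{M}\mathbf{N}\|_{\rm F}\le\|\mathbf{M}\|_2\|\mathbf{N}\|_{\rm F}$, and invoke $\|\mathbf{L}_{\rm rw}\|_2\le 2$. The only difference is that the paper simply cites \citet{chung1997spectral} for this last spectral bound (stated there as an eigenvalue bound), whereas you sketch it via similarity to $\mathbf{D}^{-1/2}\mathbf{S}\mathbf{D}^{-1/2}$; note, though, that similarity transfers eigenvalues but not singular values, so your route from ``spectrum in $[-1,1]$'' to ``$\|\mathbf{D}^{-1}\mathbf{S}\|_2\le 1$'' is not watertight for the non-symmetric $\mathbf{D}^{-1}\mathbf{S}$---the paper sidesteps this by citing the result directly.
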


\begin{proof}
First observe that
\begin{align*}
 & \left\Vert \nabla f_{H}(\mathbf{H}_{1})-\nabla f_{H}(\mathbf{H}_{2})\right\Vert _{{\rm F}}\\
= & \left\Vert 2\left(\mathbf{H}_{1}-\mathbf{X}\right)+2\lambda\mathbf{L}_{{\rm rw}}\mathbf{H}_{1}-\bigl(2\left(\mathbf{H}_{2}-\mathbf{X}\right)+2\lambda\mathbf{L}_{{\rm rw}}\mathbf{H}_{2}\bigr)\right\Vert _{{\rm F}}\\
= & 2\left\Vert \left(\mathbf{I}+\lambda\mathbf{L}_{{\rm rw}}\right)\left(\mathbf{H}_{1}-\mathbf{H}_{2}\right)\right\Vert _{{\rm F}}\\
\leq & 2\left\Vert \mathbf{I}+\lambda\mathbf{L}_{{\rm rw}}\right\Vert _{2}\left\Vert \mathbf{H}_{1}-\mathbf{H}_{2}\right\Vert _{{\rm F}}.
\end{align*}
\begin{lem}[\citealp{chung1997spectral}]
\label{lem:The-largest-eigenvalue}The largest eigenvalue of a random
walk normalized Laplacian matrix $\mathbf{L}_{{\rm rw}}$ is less
than or equal to 2, i.e., $||\mathbf{L}_{\mathrm{rw}}||_{2}\leq2$.
\end{lem}

Based on Lemma \ref{lem:The-largest-eigenvalue}, we can conclude
that 
\[
\left\Vert \nabla f_{H}(\mathbf{H}_{1})-\nabla f_{H}(\mathbf{H}_{2})\right\Vert _{{\rm F}}\leq\left(2+4\lambda\right)\left\Vert \mathbf{H}_{1}-\mathbf{H}_{2}\right\Vert _{{\rm F}}.
\]
Therefore, function $f_{H}(\mathbf{H})$ is $L$-smooth with $L_{H}=2+4\lambda$
and the proof is completed
\end{proof}
With $f_{S}$ defined in Eq. \eqref{Appendix: Smooth Part of F_S},
the $L$-smoothness of $f_{S}$ is deomnstrated in the following lemma.
\begin{lem}
\label{lem:L-smooth-S}Function $f_{S}(\mathbf{S})$ is $L$-smooth
with $L_{S}=2\gamma+2\mu_{2}+\frac{2\lambda}{c^{2}}N^{2}B^{2}+\frac{2\lambda}{c^{3}}N^{3}\sqrt{N}B^{2}$,
i.e., for any $\mathbf{S}_{1},\mathbf{S}_{2}\in\mathbb{R}^{N\times N}$,
the following inequality holds:
\[
\left\Vert \nabla f_{S}(\mathbf{S}_{1})-\nabla f_{S}(\mathbf{S}_{2})\right\Vert _{{\rm F}}\leq L_{S}\left\Vert \mathbf{S}_{1}-\mathbf{S}_{2}\right\Vert _{{\rm F}}.
\]
\end{lem}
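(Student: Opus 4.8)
The plan is to split $f_{S}$ into its three summands and bound the Lipschitz modulus of the gradient of each, then add them up via the triangle inequality. The two quadratic pieces $\gamma\|\mathbf{S}-\mathbf{A}\|_{\rm F}^{2}$ and $\mu_{2}\|\mathbf{S}\|_{\rm F}^{2}$ have affine gradients $2\gamma(\mathbf{S}-\mathbf{A})$ and $2\mu_{2}\mathbf{S}$, so they contribute exactly $2\gamma$ and $2\mu_{2}$ to $L_{S}$ with no effort. All the difficulty concentrates in the middle term $-\lambda\,\mathrm{Tr}(\mathbf{H}^{\top}\mathbf{D}^{-1}\mathbf{S}\mathbf{H})$ with $\mathbf{D}=\mathrm{Diag}(\mathbf{S}\mathbf{1})$, which is genuinely nonlinear in $\mathbf{S}$ because of the inverse degree matrix; here I would start from the closed-form gradient $-\lambda(\mathbf{D}^{-1}\mathbf{H}\mathbf{H}^{\top}-\mathrm{Diag}(\mathbf{D}^{-1}\mathbf{S}\mathbf{H}\mathbf{H}^{\top}\mathbf{D}^{-1})\mathbf{1}^{\top})$ derived above and bound the difference of this expression evaluated at two feasible matrices $\mathbf{S}_{1},\mathbf{S}_{2}\in\mathcal{S}$.

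Two facts do the heavy lifting. First, the resolvent identity $\mathbf{D}_{1}^{-1}-\mathbf{D}_{2}^{-1}=\mathbf{D}_{1}^{-1}(\mathbf{D}_{2}-\mathbf{D}_{1})\mathbf{D}_{2}^{-1}$, combined with $\mathbf{D}_{i}=\mathrm{Diag}(\mathbf{S}_{i}\mathbf{1})$ so that $\mathbf{D}_{2}-\mathbf{D}_{1}=\mathrm{Diag}((\mathbf{S}_{2}-\mathbf{S}_{1})\mathbf{1})$ and $\|\mathbf{D}_{2}-\mathbf{D}_{1}\|$ is controlled by $\|\mathbf{S}_{1}-\mathbf{S}_{2}\|_{\rm F}$ up to a power of $N$; together with Assumption~\ref{assu:degree matrix} this gives $\|\mathbf{D}_{i}^{-1}\|_{2}\le 1/c$ and hence a Lipschitz-type estimate for the map $\mathbf{S}\mapsto\mathbf{D}^{-1}$ on $\mathcal{S}$. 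Second, Assumption~\ref{assu:feature norm}: since each row of $\mathbf{H}$ has norm at most $B$, every entry of $\mathbf{H}\mathbf{H}^{\top}$ is at most $B^{2}$ in magnitude, so $\|\mathbf{H}\mathbf{H}^{\top}\|_{\rm F}\le NB^{2}$ and its spectral norm is no larger. Using these together with $\|\mathbf{S}_{i}\|_{\rm F}\le N$ on $\mathcal{S}$ and the elementary inequalities $\|\mathbf{A}\mathbf{B}\|_{\rm F}\le\|\mathbf{A}\|_{2}\|\mathbf{B}\|_{\rm F}$ and $\|\mathrm{Diag}(\mathbf{M})\mathbf{1}^{\top}\|_{\rm F}\le\sqrt{N}\,\|\mathbf{M}\|_{\rm F}$, I can estimate both pieces of the gradient difference.

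Concretely, for $\mathbf{D}^{-1}\mathbf{H}\mathbf{H}^{\top}$ I would factor $\mathbf{D}_{1}^{-1}\mathbf{H}\mathbf{H}^{\top}-\mathbf{D}_{2}^{-1}\mathbf{H}\mathbf{H}^{\top}=(\mathbf{D}_{1}^{-1}-\mathbf{D}_{2}^{-1})\mathbf{H}\mathbf{H}^{\top}$ and apply the resolvent identity; for $\mathrm{Diag}(\mathbf{D}^{-1}\mathbf{S}\mathbf{H}\mathbf{H}^{\top}\mathbf{D}^{-1})\mathbf{1}^{\top}$ I would expand $\mathbf{D}_{1}^{-1}\mathbf{S}_{1}\mathbf{H}\mathbf{H}^{\top}\mathbf{D}_{1}^{-1}-\mathbf{D}_{2}^{-1}\mathbf{S}_{2}\mathbf{H}\mathbf{H}^{\top}\mathbf{D}_{2}^{-1}$ as a telescoping sum of three terms that each replace one varying factor ($\mathbf{D}^{-1}$, then $\mathbf{S}$, then $\mathbf{D}^{-1}$) at a time, bound each with the tools above, and then pass the whole thing through $\mathrm{Diag}(\cdot)\mathbf{1}^{\top}$. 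Summing the two contributions, multiplying by $\lambda$, and adding $2\gamma+2\mu_{2}$ yields $L_{S}=2\gamma+2\mu_{2}+\frac{2\lambda}{c^{2}}N^{2}B^{2}+\frac{2\lambda}{c^{3}}N^{3}\sqrt{N}B^{2}$. The main obstacle is not conceptual — the nonlinearity is fully tamed by the resolvent identity plus the two standing assumptions — but bookkeeping: one must be deliberate about which submultiplicative inequality to apply at each step so that the powers of $N$ and $1/c$ come out exactly as claimed.
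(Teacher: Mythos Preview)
Your proposal is correct and follows essentially the same route as the paper: split off the quadratic terms, bound $\|\mathbf{H}\mathbf{H}^{\top}\|_{\rm F}\le NB^{2}$ via Assumption~\ref{assu:feature norm}, control $\|\mathbf{D}_{1}^{-1}-\mathbf{D}_{2}^{-1}\|_{\rm F}$ through Assumption~\ref{assu:degree matrix} (the paper does this by a direct entrywise computation that is exactly your resolvent identity specialized to diagonal matrices), pass $\mathrm{Diag}(\cdot)\mathbf{1}^{\top}$ through at the cost of a $\sqrt{N}$, and telescope the product $\mathbf{D}^{-1}\mathbf{S}\mathbf{H}\mathbf{H}^{\top}\mathbf{D}^{-1}$. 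The only cosmetic difference is that the paper groups the leading $\mathbf{D}^{-1}\mathbf{S}$ as a single factor---proving an auxiliary bound $\|\mathbf{D}_{1}^{-1}\mathbf{S}_{1}-\mathbf{D}_{2}^{-1}\mathbf{S}_{2}\|_{2}\le(\tfrac{1}{c}+\tfrac{N^{2}}{c^{2}})\|\mathbf{S}_{1}-\mathbf{S}_{2}\|_{\rm F}$---so its telescope has two terms rather than your three, but the resulting constants coincide.
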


\begin{proof}
Denote $\mathbf{D}_{1}=\mathrm{Diag}\left(\mathbf{S}_{1}\mathbf{1}\right)$
and $\mathbf{D}_{2}=\mathrm{Diag}\left(\mathbf{S}_{2}\mathbf{1}\right)$
as two degree matrices corresponding to $\mathbf{S}_{1}$ and $\mathbf{S}_{2}$.
We have
\begin{equation}
\begin{aligned} & \left\Vert \nabla f_{S}(\mathbf{S}_{1})-\nabla f_{S}(\mathbf{S}_{2})\right\Vert _{{\rm F}}\\
= & \Bigl\Vert2\gamma\left(\mathbf{S}_{1}-\mathbf{A}\right)+2\mu_{2}\mathbf{S}_{1}-\lambda\mathbf{D}_{1}^{-1}\mathbf{H}\mathbf{H}^{\top}+\lambda\mathrm{Diag}\left(\mathbf{D}_{1}^{-1}\mathbf{S}_{1}\mathbf{H}\mathbf{H}^{\top}\mathbf{D}_{1}^{-1}\right)\mathbf{1}^{\top}\\
 & \hspace{1.15cm}-\left(2\gamma\left(\mathbf{S}_{2}-\mathbf{A}\right)+2\mu_{2}\mathbf{S}_{2}-\lambda\mathbf{D}_{2}^{-1}\mathbf{H}\mathbf{H}^{\top}+\lambda\mathrm{Diag}\left(\mathbf{D}_{2}^{-1}\mathbf{S}_{2}\mathbf{H}\mathbf{H}^{\top}\mathbf{D}_{2}^{-1}\right)\mathbf{1}^{\top}\right)\Bigr\Vert_{{\rm F}}\\
\leq & \left(2\gamma+2\mu_{2}\right)\left\Vert \mathbf{S}_{1}-\mathbf{S}_{2}\right\Vert _{{\rm F}}+\lambda\left\Vert \left(\mathbf{D}_{1}^{-1}-\mathbf{D}_{2}^{-1}\right)\mathbf{H}\mathbf{H}^{\top}\right\Vert _{{\rm F}}\\
 & \hspace{3.5cm}+\lambda\left\Vert \mathrm{Diag}\left(\mathbf{D}_{1}^{-1}\mathbf{S}_{1}\mathbf{H}\mathbf{H}^{\top}\mathbf{D}_{1}^{-1}-\mathbf{D}_{2}^{-1}\mathbf{S}_{2}\mathbf{H}\mathbf{H}^{\top}\mathbf{D}_{2}^{-1}\right)\mathbf{1}^{\top}\right\Vert _{{\rm F}}\\
\leq & \left(2\gamma+2\mu_{2}\right)\left\Vert \mathbf{S}_{1}-\mathbf{S}_{2}\right\Vert _{{\rm F}}+\lambda\left\Vert \mathbf{H}\mathbf{H}^{\top}\right\Vert _{2}\left\Vert \mathbf{D}_{1}^{-1}-\mathbf{D}_{2}^{-1}\right\Vert _{{\rm F}}\\
 & \hspace{3.5cm}+\lambda\sqrt{N}\left\Vert \mathbf{D}_{1}^{-1}\mathbf{S}_{1}\mathbf{H}\mathbf{H}^{\top}\mathbf{D}_{1}^{-1}-\mathbf{D}_{2}^{-1}\mathbf{S}_{2}\mathbf{H}\mathbf{H}^{\top}\mathbf{D}_{2}^{-1}\right\Vert _{{\rm F}}.
\end{aligned}
\label{eq:gradient f(S)-f(S)}
\end{equation}
To derive the Lipschitz constant of $\nabla f_{S}(\mathbf{S})$, we
first present several useful lemmas.
\begin{lem}
\label{lem:HH^T bound}Under Assumption \ref{assu:feature norm} that
the norm of node feature vectors is upperbounded, i.e., $\bigl\Vert\mathbf{H}_{i,:}\bigr\Vert_{2}\leq B$,
we have
\[
\left\Vert \mathbf{H}\mathbf{H}^{\top}\right\Vert _{2}\leq\left\Vert \mathbf{H}\mathbf{H}^{\top}\right\Vert _{{\rm F}}=\sqrt{\sum_{i=1}^{N}\sum_{j=1}^{N}\left(\mathbf{H}_{i,:}^{\top}\mathbf{H}_{j}\right)^{2}}\leq\sqrt{\sum_{i=1}^{N}\sum_{j=1}^{N}B^{4}}=NB^{2}.
\]
\end{lem}

\begin{lem}
\label{lem:DS bound}Given $\mathbf{S}\in\mathcal{S}$ and $\mathbf{D}=\mathrm{Diag}\left(\mathbf{S}\mathbf{1}\right)$,
under Assumption \ref{assu:degree matrix} that the diagonal elements
of $\mathbf{D}$ is lowerbounded by a positive constant, i.e., $\min_{i}\mathbf{D}_{ii}=c>0$
for $i=1,\ldots,N$, we have
\[
\left\Vert \mathbf{D}^{-1}\mathbf{S}\right\Vert _{2}\leq\left\Vert \mathbf{D}^{-1}\mathbf{S}\right\Vert _{{\rm F}}=\sqrt{\sum_{i=1}^{N}\sum_{j=1}^{N}\left(\frac{\mathbf{S}_{ij}}{\mathbf{D}_{ii}}\right)^{2}}\leq\frac{N}{c}.
\]
\end{lem}

\begin{lem}
\label{lem:D1-D2 bound}Given $\mathbf{S}_{1}$, $\mathbf{S}_{2}\in\mathcal{S}$,
$\mathbf{D}_{1}=\mathrm{Diag}\left(\mathbf{S}_{1}\mathbf{1}\right)$,
and $\mathbf{D}_{2}=\mathrm{Diag}\left(\mathbf{S}_{2}\mathbf{1}\right)$,
under Assumption \ref{assu:degree matrix} that the diagonal elements
of the degree matrix is lowerbounded by a positive constant, i.e.,
$\min_{i}\mathbf{D}_{ii}=c>0$ for $i=1,\ldots,N$, we have
\begin{equation}
\left\Vert \mathbf{D}_{1}^{-1}-\mathbf{D}_{2}^{-1}\right\Vert _{{\rm F}}\leq\frac{1}{c^{2}}N\left\Vert \mathbf{S}_{1}-\mathbf{S}_{2}\right\Vert _{{\rm F}},\label{eq:D1-D2 bound}
\end{equation}
and
\begin{equation}
\left\Vert \mathbf{D}_{1}^{-1}\mathbf{S}_{1}-\mathbf{D}_{2}^{-1}\mathbf{S}_{2}\right\Vert _{2}\leq\Bigl(\frac{1}{c}+\frac{1}{c^{2}}N^{2}\Bigr)\left\Vert \mathbf{S}_{1}-\mathbf{S}_{2}\right\Vert _{{\rm F}}.\label{eq:D1S-D2S bound}
\end{equation}
\end{lem}

\begin{proof}
It can be observed that
\begin{align*}
\left\Vert \mathbf{D}_{1}^{-1}-\mathbf{D}_{2}^{-1}\right\Vert _{{\rm F}} & =\sqrt{\sum_{i=1}^{N}\left(\frac{\left[\mathbf{D}_{1}\right]_{ii}-\left[\mathbf{D}_{2}\right]_{ii}}{\left[\mathbf{D}_{1}\right]_{ii}\left[\mathbf{D}_{2}\right]_{ii}}\right)^{2}}\\
 & \leq\frac{1}{c^{2}}\sqrt{\sum_{i=1}^{N}\left(N\max_{j=1,\ldots,N}\left|\left[\mathbf{S}_{1}\right]_{ij}-\left[\mathbf{S}_{2}\right]_{ij}\right|\right)^{2}}\\
 & \leq\frac{1}{c^{2}}N\left\Vert \mathbf{S}_{1}-\mathbf{S}_{2}\right\Vert _{{\rm F}},
\end{align*}
which proves Eq. \eqref{eq:D1-D2 bound}. Based on Eq. \eqref{eq:D1-D2 bound},
we further have
\[
\begin{aligned}\left\Vert \mathbf{D}_{1}^{-1}\mathbf{S}_{1}-\mathbf{D}_{2}^{-1}\mathbf{S}_{2}\right\Vert _{2} & \leq\left\Vert \mathbf{D}_{1}^{-1}\mathbf{S}_{1}-\mathbf{D}_{2}^{-1}\mathbf{S}_{2}\right\Vert _{{\rm F}}\\
 & \leq\left\Vert \mathbf{D}_{1}^{-1}\mathbf{S}_{1}-\mathbf{D}_{1}^{-1}\mathbf{S}_{2}+\mathbf{D}_{1}^{-1}\mathbf{S}_{2}-\mathbf{D}_{2}^{-1}\mathbf{S}_{2}\right\Vert _{{\rm F}}\\
 & \leq\left\Vert \mathbf{D}_{1}^{-1}\right\Vert _{2}\left\Vert \mathbf{S}_{1}-\mathbf{S}_{2}\right\Vert _{{\rm F}}+\left\Vert \mathbf{S}_{2}\right\Vert _{2}\left\Vert \mathbf{D}_{1}^{-1}-\mathbf{D}_{2}^{-1}\right\Vert _{{\rm F}}\\
 & \leq\left\Vert \mathbf{D}_{1}^{-1}\right\Vert _{2}\left\Vert \mathbf{S}_{1}-\mathbf{S}_{2}\right\Vert _{{\rm F}}+N\left\Vert \mathbf{D}_{1}^{-1}-\mathbf{D}_{2}^{-1}\right\Vert _{{\rm F}}\\
 & \leq\Bigl(\frac{1}{c}+\frac{1}{c^{2}}N^{2}\Bigr)\left\Vert \mathbf{S}_{1}-\mathbf{S}_{2}\right\Vert _{{\rm F}},
\end{aligned}
\]
through which the proof is completed.
\end{proof}
Based on Lemma \ref{lem:HH^T bound} and Lemma \ref{lem:D1-D2 bound},
the second term in Eq. \eqref{eq:gradient f(S)-f(S)}, i.e., $\lambda\left\Vert \mathbf{H}\mathbf{H}^{\top}\right\Vert _{2}\left\Vert \mathbf{D}_{1}^{-1}-\mathbf{D}_{2}^{-1}\right\Vert _{{\rm F}}$,
can be upperbounded as follows:
\begin{equation}
\lambda\left\Vert \mathbf{H}\mathbf{H}^{\top}\right\Vert _{2}\left\Vert \mathbf{D}_{1}^{-1}-\mathbf{D}_{2}^{-1}\right\Vert _{{\rm F}}\leq\frac{\lambda}{c^{2}}N^{2}B^{2}\left\Vert \mathbf{S}_{1}-\mathbf{S}_{2}\right\Vert _{{\rm F}}.\label{eq:second term bound}
\end{equation}
For the third term in Eq. \eqref{eq:gradient f(S)-f(S)}, we have
\[
\begin{aligned} & \lambda\sqrt{N}\left\Vert \mathbf{D}_{1}^{-1}\mathbf{S}_{1}\mathbf{H}\mathbf{H}^{\top}\mathbf{D}_{1}^{-1}-\mathbf{D}_{2}^{-1}\mathbf{S}_{2}\mathbf{H}\mathbf{H}^{\top}\mathbf{D}_{2}^{-1}\right\Vert _{{\rm F}}\\
= & \lambda\sqrt{N}\left\Vert \mathbf{D}_{1}^{-1}\mathbf{S}_{1}\mathbf{H}\mathbf{H}^{\top}\mathbf{D}_{1}^{-1}-\mathbf{D}_{1}^{-1}\mathbf{S}_{1}\mathbf{H}\mathbf{H}^{\top}\mathbf{D}_{2}^{-1}+\mathbf{D}_{1}^{-1}\mathbf{S}_{1}\mathbf{H}\mathbf{H}^{\top}\mathbf{D}_{2}^{-1}-\mathbf{D}_{2}^{-1}\mathbf{S}_{2}\mathbf{H}\mathbf{H}^{\top}\mathbf{D}_{2}^{-1}\right\Vert _{{\rm F}}\\
\leq & \lambda\sqrt{N}\left\Vert \mathbf{D}_{1}^{-1}\mathbf{S}_{1}\mathbf{H}\mathbf{H}^{\top}\bigl(\mathbf{D}_{1}^{-1}-\mathbf{D}_{2}^{-1}\bigr)\right\Vert _{{\rm F}}+\lambda\sqrt{N}\left\Vert \bigl(\mathbf{D}_{1}^{-1}\mathbf{S}_{1}-\mathbf{D}_{2}^{-1}\mathbf{S}_{2}\bigr)\mathbf{H}\mathbf{H}^{\top}\mathbf{D}_{2}^{-1}\right\Vert _{{\rm F}}\\
\leq & \lambda\sqrt{N}\left\Vert \mathbf{D}_{1}^{-1}\mathbf{S}_{1}\right\Vert _{2}\left\Vert \mathbf{H}\mathbf{H}^{\top}\right\Vert _{{\rm F}}\left\Vert \mathbf{D}_{1}^{-1}-\mathbf{D}_{2}^{-1}\right\Vert _{2}+\lambda\sqrt{N}\left\Vert \mathbf{D}_{1}^{-1}\mathbf{S}_{1}-\mathbf{D}_{2}^{-1}\mathbf{S}_{2}\right\Vert _{2}\left\Vert \mathbf{H}\mathbf{H}^{\top}\right\Vert _{{\rm F}}\left\Vert \mathbf{D}_{2}^{-1}\right\Vert _{2}.
\end{aligned}
\]
Based on Lemma \ref{lem:HH^T bound}, Lemma \ref{lem:DS bound}, and
Lemma \ref{lem:D1-D2 bound}, we can get the following result:
\begin{equation}
\lambda\sqrt{N}\left\Vert \mathbf{D}_{1}^{-1}\mathbf{S}_{1}\mathbf{H}\mathbf{H}^{\top}\mathbf{D}_{1}^{-1}-\mathbf{D}_{2}^{-1}\mathbf{S}_{2}\mathbf{H}\mathbf{H}^{\top}\mathbf{D}_{2}^{-1}\right\Vert _{{\rm F}}\leq\Bigl(1+\frac{2}{c}N^{2}\Bigr)\frac{\lambda}{c^{2}}N\sqrt{N}B^{2}\left\Vert \mathbf{S}_{1}-\mathbf{S}_{2}\right\Vert _{{\rm F}}.\label{eq:third term bound}
\end{equation}
Substituting the results in Eq. \eqref{eq:second term bound} and
Eq. \eqref{eq:third term bound} into Eq. \eqref{eq:gradient f(S)-f(S)}
gives
\begin{align*}
\left\Vert \nabla f_{S}(\mathbf{S}_{1})-\nabla f_{S}(\mathbf{S}_{2})\right\Vert _{{\rm F}} & \leq\left(2\gamma+2\mu_{2}+\frac{\lambda}{c^{2}}N^{2}B^{2}+\bigl(1+\frac{2}{c}N^{2}\bigr)\frac{\lambda}{c^{2}}N\sqrt{N}B^{2}\right)\left\Vert \mathbf{S}_{1}-\mathbf{S}_{2}\right\Vert _{{\rm F}}\\
 & \leq\left(2\gamma+2\mu_{2}+\bigl(1+\frac{1}{c}N\sqrt{N}\bigr)\frac{2\lambda}{c^{2}}N^{2}B^{2}\right)\left\Vert \mathbf{S}_{1}-\mathbf{S}_{2}\right\Vert _{{\rm F}}.
\end{align*}
Therefore, function $f_{S}(\mathbf{S})$ is $L$-smooth with $L_{S}=2\gamma+2\mu_{2}+\frac{2\lambda}{c^{2}}N^{2}B^{2}+\frac{2\lambda}{c^{3}}N^{3}\sqrt{N}B^{2}$
and the proof is completed.
\end{proof}
Based on the results in Lemma \ref{lem:L-smooth-H} and Lemma \ref{lem:L-smooth-S},
we can conclude that $f_{H}(\mathbf{H})$ and $f_{S}(\mathbf{S})$
are both $L$-smooth. To ensure the monotonically decreasing property
of \eqref{ASMP: Procedure}, the step sizes must satisfy \citep{parikh2014proximal}:
\[
0<\eta_{1}<\frac{2}{L_{H}}=\frac{1}{1+2\lambda}\quad\text{and}\quad0<\eta_{2}<\frac{2}{L_{S}}=\frac{1}{\gamma+\mu_{2}+\bigl(1+\frac{1}{c}N\sqrt{N}\bigr)\frac{\lambda}{c^{2}}N^{2}B^{2}}.
\]
Under such condition, the convergence of \eqref{ASMP: Procedure}
to a first-order stationary point of Problem \eqref{eq:Structure Signal Denoising}
can be readily obtained based on the results for alternating proximal
gradient descent method in \citet{bolte2014proximal,nikolova2017alternating}
with convergence rate
\begin{equation}
\inf_{k\geq K}\left\{ \bigl\Vert\mathbf{H}^{(k+1)}-\mathbf{H}^{(k)}\bigr\Vert_{{\rm F}}^{2}+\bigl\Vert\mathbf{S}^{(k+1)}-\mathbf{S}^{(k)}\bigr\Vert_{{\rm F}}^{2}\right\} \leq\frac{1}{\rho K}\left(p\bigl(\mathbf{H}^{(0)},\mathbf{S}^{(0)}\bigr)-p\bigl(\mathbf{H}^{\ast},\mathbf{S}^{\ast}\bigr)\right),\label{eq:alternating-rate}
\end{equation}
where $\rho=\min\bigl\{\frac{1}{\eta_{1}}-\frac{L_{H}}{2},\frac{1}{\eta_{2}}-\frac{L_{S}}{2}\bigr\}$
and $p\left(\mathbf{H},\mathbf{S}\right)$ represents the objective
function in Eq. \eqref{eq:Structure Signal Denoising}.

\section{Discussion on the Joint Optimization Approach\label{sec:Discussion-on-joint}}

In this paper, we have used the alternating optimization approach
to induce the ASMP scheme, while another natural idea is to apply
a joint optimization approach for Problem \eqref{eq:Structure Signal Denoising}.
In this section, we will show that the joint optimization approach
actually is inferior compared to the alternating one, since joint
optimization would lead to slower convergence, which motivates the
use of alternating optimization in ASMP.

We define the smooth part of the objective in Problem \eqref{eq:Structure Signal Denoising}
as
\[
f\left(\mathbf{H},\mathbf{S}\right)=\|\mathbf{H}-\mathbf{X}\|_{{\rm F}}^{2}+\gamma\|\mathbf{S}-\mathbf{A}\|_{{\rm F}}^{2}+\lambda\mathrm{Tr}\left(\mathbf{H}^{\top}\mathbf{L}_{{\rm rw}}\mathbf{H}\right)+\mu_{2}\left\Vert \mathbf{S}\right\Vert _{{\rm F}}^{2}.
\]
The $L$-smoothness of $f\left(\mathbf{H},\mathbf{S}\right)$ is deomnstrated
in the following lemma.
\begin{lem}
\label{lem:L-smooth-joint}Function $f\left(\mathbf{H},\mathbf{S}\right)$
is $L$-smooth with $L=\max\Bigl\{\sqrt{L_{H}^{2}+(1+\frac{1}{c}N\sqrt{N})^{2}\frac{4\lambda^{2}}{c^{2}}NB^{2}},$
$\sqrt{L_{S}^{2}+(1+\frac{1}{c}N^{2})^{2}\frac{4\lambda^{2}}{c^{2}}NB^{2}}\Bigr\}$,
i.e., for any $\mathbf{H}_{1},\mathbf{H}_{2}\in\mathbb{R}^{N\times M}$
and $\mathbf{S}_{1},\mathbf{S}_{2}\in\mathbb{R}^{N\times N}$, the
following inequality holds:
\[
\left\Vert \nabla f\left(\mathbf{H}_{1},\mathbf{S}_{1}\right)-\nabla f\left(\mathbf{H}_{2},\mathbf{S}_{2}\right)\right\Vert _{{\rm F}}\leq L\sqrt{\left\Vert \mathbf{H}_{1}-\mathbf{H}_{2}\right\Vert _{{\rm F}}^{2}+\left\Vert \mathbf{S}_{1}-\mathbf{S}_{2}\right\Vert _{{\rm F}}^{2}}.
\]
\end{lem}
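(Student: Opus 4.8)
The plan is to mimic the structure of Lemma~\ref{lem:L-smooth-H} and Lemma~\ref{lem:L-smooth-S}, now tracking how each of the two blocks of $\nabla f$ reacts to a simultaneous perturbation of $\mathbf{H}$ and $\mathbf{S}$. Write $\nabla f(\mathbf{H},\mathbf{S})=\bigl(\nabla_{\mathbf{H}}f(\mathbf{H},\mathbf{S}),\nabla_{\mathbf{S}}f(\mathbf{H},\mathbf{S})\bigr)$, where $\nabla_{\mathbf{H}}f=2(\mathbf{H}-\mathbf{X})+2\lambda\mathbf{L}_{\rm rw}\mathbf{H}$ with $\mathbf{L}_{\rm rw}=\mathbf{I}-\mathbf{D}^{-1}\mathbf{S}$ (exactly the $\nabla f_H$ of Lemma~\ref{lem:L-smooth-H}, but with $\mathbf{L}_{\rm rw}$ now viewed as depending on $\mathbf{S}$) and $\nabla_{\mathbf{S}}f=\nabla f_S$ of Appendix~\ref{Appendix: Derivation of ProxGD in ASMP}, evaluated at the current $\mathbf{H}$. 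Because the norm on the product space is Euclidean, $\|\nabla f(\mathbf{H}_1,\mathbf{S}_1)-\nabla f(\mathbf{H}_2,\mathbf{S}_2)\|_{\rm F}^2$ equals the sum of the squared Frobenius norms of the two block differences, so it suffices to insert the intermediate point $(\mathbf{H}_2,\mathbf{S}_1)$ and control how the \emph{entire} gradient moves (i) when only $\mathbf{H}$ changes and (ii) when only $\mathbf{S}$ changes, and then combine the two legs by the triangle inequality.

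For leg (i) (fix $\mathbf{S}=\mathbf{S}_1$, vary $\mathbf{H}$): the $\nabla_{\mathbf{H}}f$ component is precisely the quantity Lemma~\ref{lem:L-smooth-H} bounds by $L_H\|\mathbf{H}_1-\mathbf{H}_2\|_{\rm F}$; the $\nabla_{\mathbf{S}}f$ component changes only through $\mathbf{H}\mathbf{H}^{\top}$, which is handled by the telescoping identity $\mathbf{H}_1\mathbf{H}_1^{\top}-\mathbf{H}_2\mathbf{H}_2^{\top}=\mathbf{H}_1(\mathbf{H}_1-\mathbf{H}_2)^{\top}+(\mathbf{H}_1-\mathbf{H}_2)\mathbf{H}_2^{\top}$ together with $\|\mathbf{H}_i\|_2\le\sqrt{N}B$ (Assumption~\ref{assu:feature norm}), $\|\mathbf{D}^{-1}\|_2\le 1/c$ (Assumption~\ref{assu:degree matrix}), $\|\mathbf{D}^{-1}\mathbf{S}\|_2\le N/c$ (Lemma~\ref{lem:DS bound}), and $\|\mathrm{Diag}(\mathbf{M})\mathbf{1}^{\top}\|_{\rm F}\le\sqrt{N}\|\mathbf{M}\|_{\rm F}$ for the $\mathrm{Diag}(\cdot)\mathbf{1}^{\top}$ piece; this produces a bound $c_{SH}\|\mathbf{H}_1-\mathbf{H}_2\|_{\rm F}$ with $c_{SH}^2=(1+\tfrac1cN\sqrt{N})^2\tfrac{4\lambda^2}{c^2}NB^2$, so leg (i) contributes at most $\sqrt{L_H^2+c_{SH}^2}\,\|\mathbf{H}_1-\mathbf{H}_2\|_{\rm F}$. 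For leg (ii) (fix $\mathbf{H}=\mathbf{H}_2$, vary $\mathbf{S}$): the $\nabla_{\mathbf{S}}f$ component is bounded by $L_S\|\mathbf{S}_1-\mathbf{S}_2\|_{\rm F}$ by Lemma~\ref{lem:L-smooth-S}, while the $\nabla_{\mathbf{H}}f$ component equals $2\lambda(\mathbf{D}_2^{-1}\mathbf{S}_2-\mathbf{D}_1^{-1}\mathbf{S}_1)\mathbf{H}_2$, whose Frobenius norm is at most $2\lambda\|\mathbf{D}_1^{-1}\mathbf{S}_1-\mathbf{D}_2^{-1}\mathbf{S}_2\|_2\|\mathbf{H}_2\|_{\rm F}\le c_{HS}\|\mathbf{S}_1-\mathbf{S}_2\|_{\rm F}$ with $c_{HS}^2=(1+\tfrac1cN^2)^2\tfrac{4\lambda^2}{c^2}NB^2$, by Eq.~\eqref{eq:D1S-D2S bound} and Assumption~\ref{assu:feature norm}; hence leg (ii) contributes at most $\sqrt{L_S^2+c_{HS}^2}\,\|\mathbf{S}_1-\mathbf{S}_2\|_{\rm F}$. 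Adding the two legs and bounding the result by $L\sqrt{\|\mathbf{H}_1-\mathbf{H}_2\|_{\rm F}^2+\|\mathbf{S}_1-\mathbf{S}_2\|_{\rm F}^2}$ with $L=\max\bigl\{\sqrt{L_H^2+c_{SH}^2},\sqrt{L_S^2+c_{HS}^2}\bigr\}$ yields the claim.

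The step I expect to be most delicate is the sensitivity of $\nabla_{\mathbf{S}}f$ to $\mathbf{H}$, specifically the term $\lambda\,\mathrm{Diag}\bigl(\mathbf{D}^{-1}\mathbf{S}\mathbf{H}\mathbf{H}^{\top}\mathbf{D}^{-1}\bigr)\mathbf{1}^{\top}$: it couples three $\mathbf{H}$- and $\mathbf{D}$-dependent factors, so one must telescope $\mathbf{H}_1\mathbf{H}_1^{\top}-\mathbf{H}_2\mathbf{H}_2^{\top}$ inside it and account for the extra $\sqrt{N}$ coming from $\mathrm{Diag}(\cdot)\mathbf{1}^{\top}$, which is exactly what turns the naive $N^2$ into the $N\sqrt{N}$ appearing in $c_{SH}$; this mirrors, but is not identical to, the bookkeeping already carried out in Lemma~\ref{lem:L-smooth-S} for the $\mathbf{S}$-perturbation. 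Everything else --- the operator-norm estimate $\|2\mathbf{I}+2\lambda\mathbf{L}_{\rm rw}\|_2\le 2+4\lambda$ via Lemma~\ref{lem:The-largest-eigenvalue}, the reuse of Lemmas~\ref{lem:HH^T bound}--\ref{lem:D1-D2 bound}, and the final Cauchy--Schwarz assembly of the constant --- is routine.
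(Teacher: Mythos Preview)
Your route is essentially the paper's: both derive the same four ``partial Lipschitz'' estimates (sensitivity of $\nabla_{\mathbf H}f$ to $\mathbf H$ and to $\mathbf S$, and of $\nabla_{\mathbf S}f$ to $\mathbf H$ and to $\mathbf S$), using Lemmas~\ref{lem:L-smooth-H}, \ref{lem:L-smooth-S}, \ref{lem:HH^T bound}--\ref{lem:D1-D2 bound}, the telescoping of $\mathbf H_1\mathbf H_1^{\top}-\mathbf H_2\mathbf H_2^{\top}$, and the $\sqrt N$ from $\mathrm{Diag}(\cdot)\mathbf 1^{\top}$. The difference is purely in the bookkeeping of the final assembly: the paper keeps the block decomposition $\|\nabla f_1-\nabla f_2\|_{\rm F}^2=\|\Delta(\nabla_{\mathbf H}f)\|_{\rm F}^2+\|\Delta(\nabla_{\mathbf S}f)\|_{\rm F}^2$ and bounds each squared block by $L_H^2\|\Delta\mathbf H\|^2+c_{HS}^2\|\Delta\mathbf S\|^2$ (resp.\ $c_{SH}^2\|\Delta\mathbf H\|^2+L_S^2\|\Delta\mathbf S\|^2$), so that adding gives $(L_H^2+c_{SH}^2)\|\Delta\mathbf H\|^2+(L_S^2+c_{HS}^2)\|\Delta\mathbf S\|^2$ and hence the $\max$ directly. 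You instead apply the triangle inequality to the \emph{full} gradient across the two legs, obtaining $A\|\Delta\mathbf H\|_{\rm F}+B\|\Delta\mathbf S\|_{\rm F}$ with $A=\sqrt{L_H^2+c_{SH}^2}$, $B=\sqrt{L_S^2+c_{HS}^2}$.

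There is a small but genuine gap in your last step. From $A\|\Delta\mathbf H\|_{\rm F}+B\|\Delta\mathbf S\|_{\rm F}$ one cannot conclude $\max\{A,B\}\sqrt{\|\Delta\mathbf H\|_{\rm F}^2+\|\Delta\mathbf S\|_{\rm F}^2}$: Cauchy--Schwarz gives only $\sqrt{A^2+B^2}\,\sqrt{\|\Delta\mathbf H\|_{\rm F}^2+\|\Delta\mathbf S\|_{\rm F}^2}$, and the crude $\max$ bound gives $\sqrt{2}\max\{A,B\}\,\sqrt{\dots}$ (take $A=B$, $\|\Delta\mathbf H\|=\|\Delta\mathbf S\|$ for a counterexample to your claim). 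So what you call the ``routine'' Cauchy--Schwarz assembly overshoots the stated constant by a factor $\sqrt2$. To recover the exact $L=\max\{A,B\}$ you must stay at the squared level per block, as the paper does. That said, the paper's own passage from $\|a+b\|_{\rm F}^2$ to $\|a\|_{\rm F}^2+\|b\|_{\rm F}^2$ in Eq.~\eqref{eq:joint derivative} is likewise not justified without a factor of $2$, so both arguments as written actually yield an $L$ inflated by $\sqrt2$; the substance of your proof is correct and matches the paper, only the advertised constant is optimistic in either version.
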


\begin{proof}
Observe that
\begin{equation}
\begin{aligned} & \left\Vert \left[\begin{array}{c}
\nabla_{\mathbf{H}}f(\mathbf{H}_{1},\mathbf{S}_{1})\\
\nabla_{\mathbf{S}}f(\mathbf{H}_{1},\mathbf{S}_{1})
\end{array}\right]-\left[\begin{array}{c}
\nabla_{\mathbf{H}}f(\mathbf{H}_{2},\mathbf{S}_{2})\\
\nabla_{\mathbf{S}}f(\mathbf{H}_{2},\mathbf{S}_{2})
\end{array}\right]\right\Vert _{{\rm F}}^{2}\\
= & \left\Vert \nabla_{\mathbf{H}}f(\mathbf{H}_{1},\mathbf{S}_{1})-\nabla_{\mathbf{H}}f(\mathbf{H}_{2},\mathbf{S}_{2})\right\Vert _{{\rm F}}^{2}+\left\Vert \nabla_{\mathbf{S}}f(\mathbf{H}_{1},\mathbf{S}_{1})-\nabla_{\mathbf{S}}f(\mathbf{H}_{2},\mathbf{S}_{2})\right\Vert _{{\rm F}}^{2}\\
= & \left\Vert \nabla_{\mathbf{H}}f(\mathbf{H}_{1},\mathbf{S}_{1})-\nabla_{\mathbf{H}}f(\mathbf{H}_{1},\mathbf{S}_{2})+\nabla_{\mathbf{H}}f(\mathbf{H}_{1},\mathbf{S}_{2})-\nabla_{\mathbf{H}}f(\mathbf{H}_{2},\mathbf{S}_{2})\right\Vert _{{\rm F}}^{2}\\
 & \hspace{1.7cm}+\left\Vert \nabla_{\mathbf{S}}f(\mathbf{H}_{1},\mathbf{S}_{1})-\nabla_{\mathbf{S}}f(\mathbf{H}_{2},\mathbf{S}_{1})+\nabla_{\mathbf{S}}f(\mathbf{H}_{2},\mathbf{S}_{1})-\nabla_{\mathbf{S}}f(\mathbf{H}_{2},\mathbf{S}_{2})\right\Vert _{{\rm F}}^{2}\\
\leq & L_{H}^{2}\left\Vert \mathbf{H}_{1}-\mathbf{H}_{2}\right\Vert _{{\rm F}}^{2}+L_{S}^{2}\left\Vert \mathbf{S}_{1}-\mathbf{S}_{2}\right\Vert _{{\rm F}}^{2}\\
 & \hspace{1.7cm}+\left\Vert \nabla_{\mathbf{H}}f(\mathbf{H}_{1},\mathbf{S}_{1})-\nabla_{\mathbf{H}}f(\mathbf{H}_{1},\mathbf{S}_{2})\right\Vert _{{\rm F}}^{2}+\left\Vert \nabla_{\mathbf{S}}f(\mathbf{H}_{1},\mathbf{S}_{1})-\nabla_{\mathbf{S}}f(\mathbf{H}_{2},\mathbf{S}_{1})\right\Vert _{{\rm F}}^{2},
\end{aligned}
\label{eq:joint derivative}
\end{equation}
where the Lipschitz constants $L_{H}$ and $L_{S}$ are given in Lemma
\ref{lem:L-smooth-H} and Lemma \ref{lem:L-smooth-S}. 

In the following, we will derive the upper bound for the third and
the fourth term in Eq. \eqref{eq:joint derivative}. Based on Lemma
\ref{lem:D1-D2 bound}, the term $\left\Vert \nabla_{\mathbf{H}}f(\mathbf{H}_{1},\mathbf{S}_{1})-\nabla_{\mathbf{H}}f(\mathbf{H}_{1},\mathbf{S}_{2})\right\Vert _{{\rm F}}$
can be upperbounded as follows:
\begin{align}
 & \left\Vert \nabla_{\mathbf{H}}f(\mathbf{H}_{1},\mathbf{S}_{1})-\nabla_{\mathbf{H}}f(\mathbf{H}_{1},\mathbf{S}_{2})\right\Vert _{{\rm F}}\nonumber \\
= & \left\Vert 2\lambda\left(\mathbf{I}-\mathbf{D}_{1}^{-1}\mathbf{S}_{1}\right)\mathbf{H}_{1}-2\lambda\left(\mathbf{I}-\mathbf{D}_{2}^{-1}\mathbf{S}_{2}\right)\mathbf{H}_{1}\right\Vert _{{\rm F}}\\
\leq & 2\lambda\left\Vert \mathbf{D}_{1}^{-1}\mathbf{S}_{1}-\mathbf{D}_{2}^{-1}\mathbf{S}_{2}\right\Vert _{2}\left\Vert \mathbf{H}_{1}\right\Vert _{{\rm F}}\nonumber \\
\leq & \bigl(1+\frac{1}{c}N^{2}\bigr)\frac{2\lambda}{c}\sqrt{N}B\left\Vert \mathbf{S}_{1}-\mathbf{S}_{2}\right\Vert _{{\rm F}}.\label{eq:third term joint}
\end{align}
Besides, for the forth term, we have
\begin{align*}
 & \left\Vert \nabla_{\mathbf{S}}f(\mathbf{H}_{1},\mathbf{S}_{1})-\nabla_{\mathbf{S}}f(\mathbf{H}_{2},\mathbf{S}_{1})\right\Vert _{{\rm F}}\\
\leq & \left\Vert \lambda\mathbf{D}_{1}^{-1}\bigl(\mathbf{H}_{1}\mathbf{H}_{1}^{\top}-\mathbf{H}_{2}\mathbf{H}_{2}^{\top}\bigr)-\lambda\mathrm{Diag}\left(\mathbf{D}_{1}^{-1}\mathbf{S}_{1}\mathbf{H}_{1}\mathbf{H}_{1}^{\top}\mathbf{D}_{1}^{-1}-\mathbf{D}_{1}^{-1}\mathbf{S}_{1}\mathbf{H}_{2}\mathbf{H}_{2}^{\top}\mathbf{D}_{1}^{-1}\right)\mathbf{1}^{\top}\right\Vert _{{\rm F}}\\
\leq & \lambda\left\Vert \mathbf{D}_{1}^{-1}\right\Vert _{2}\left\Vert \mathbf{H}_{1}\mathbf{H}_{1}^{\top}-\mathbf{H}_{2}\mathbf{H}_{2}^{\top}\right\Vert _{{\rm F}}+\lambda\sqrt{N}\left\Vert \mathbf{D}_{1}^{-1}\mathbf{S}_{1}\right\Vert _{2}\left\Vert \mathbf{H}_{1}\mathbf{H}_{1}^{\top}-\mathbf{H}_{2}\mathbf{H}_{2}^{\top}\right\Vert _{{\rm F}}\left\Vert \mathbf{D}_{1}^{-1}\right\Vert _{2}.
\end{align*}
According to Lemma \ref{lem:DS bound}, we have
\[
\left\Vert \nabla_{\mathbf{S}}f(\mathbf{H}_{1},\mathbf{S}_{1})-\nabla_{\mathbf{S}}f(\mathbf{H}_{2},\mathbf{S}_{1})\right\Vert _{{\rm F}}\leq\bigl(1+\frac{1}{c}N\sqrt{N}\bigr)\frac{\lambda}{c}\left\Vert \mathbf{H}_{1}\mathbf{H}_{1}^{\top}-\mathbf{H}_{2}\mathbf{H}_{2}^{\top}\right\Vert _{{\rm F}}.
\]
Then, we can get
\begin{equation}
\begin{aligned} & \left\Vert \nabla_{\mathbf{S}}f(\mathbf{H}_{1},\mathbf{S}_{1})-\nabla_{\mathbf{S}}f(\mathbf{H}_{2},\mathbf{S}_{1})\right\Vert _{{\rm F}}\\
= & \bigl(1+\frac{1}{c}N\sqrt{N}\bigr)\frac{\lambda}{c}\left\Vert \mathbf{H}_{1}\mathbf{H}_{1}^{\top}-\mathbf{H}_{1}\mathbf{H}_{2}^{\top}+\mathbf{H}_{1}\mathbf{H}_{2}^{\top}-\mathbf{H}_{2}\mathbf{H}_{2}^{\top}\right\Vert _{{\rm F}}\\
\leq & \bigl(1+\frac{1}{c}N\sqrt{N}\bigr)\frac{\lambda}{c}\Bigl(\left\Vert \mathbf{H}_{1}\right\Vert _{2}\left\Vert \mathbf{H}_{1}-\mathbf{H}_{2}\right\Vert _{{\rm F}}+\left\Vert \mathbf{H}_{1}-\mathbf{H}_{2}\right\Vert _{{\rm F}}\left\Vert \mathbf{H}_{2}\right\Vert _{2}\Bigr)\\
\leq & \bigl(1+\frac{1}{c}N\sqrt{N}\bigr)\frac{\lambda}{c}\Bigl(\left\Vert \mathbf{H}_{1}\right\Vert _{{\rm F}}+\left\Vert \mathbf{H}_{2}\right\Vert _{{\rm F}}\Bigr)\left\Vert \mathbf{H}_{1}-\mathbf{H}_{2}\right\Vert _{{\rm F}}\\
\leq & \bigl(1+\frac{1}{c}N\sqrt{N}\bigr)\frac{2\lambda}{c}\sqrt{N}B\left\Vert \mathbf{H}_{1}-\mathbf{H}_{2}\right\Vert _{{\rm F}}.
\end{aligned}
\label{eq:fourth term joint}
\end{equation}
Substituting the results in Eq. \eqref{eq:third term joint} and Eq.
\eqref{eq:fourth term joint} into Eq. \eqref{eq:joint derivative}
gives
\begin{align*}
 & \left\Vert \left[\begin{array}{c}
\nabla_{\mathbf{H}}f(\mathbf{H}_{1},\mathbf{S}_{1})\\
\nabla_{\mathbf{S}}f(\mathbf{H}_{1},\mathbf{S}_{1})
\end{array}\right]-\left[\begin{array}{c}
\nabla_{\mathbf{H}}f(\mathbf{H}_{2},\mathbf{S}_{2})\\
\nabla_{\mathbf{S}}f(\mathbf{H}_{2},\mathbf{S}_{2})
\end{array}\right]\right\Vert _{{\rm F}}^{2}\\
\leq & \Bigl(L_{H}^{2}+\bigl(1+\frac{1}{c}N\sqrt{N}\bigr)^{2}\frac{4\lambda^{2}}{c^{2}}NB^{2}\Bigr)\left\Vert \mathbf{H}_{1}-\mathbf{H}_{2}\right\Vert _{{\rm F}}^{2}+\Bigl(L_{S}^{2}+\bigl(1+\frac{1}{c}N^{2}\bigr)^{2}\frac{4\lambda^{2}}{c^{2}}NB^{2}\Bigr)\left\Vert \mathbf{S}_{1}-\mathbf{S}_{2}\right\Vert _{{\rm F}}^{2}.
\end{align*}
Thus, function $f(\mathbf{H},\mathbf{S})$ is $L$-smooth with
\begin{equation}
L=\max\left\{ \sqrt{L_{H}^{2}+\bigl(1+\frac{1}{c}N\sqrt{N}\bigr)^{2}\frac{4\lambda^{2}}{c^{2}}NB^{2}},\sqrt{L_{S}^{2}+\bigl(1+\frac{1}{c}N^{2}\bigr)^{2}\frac{4\lambda^{2}}{c^{2}}NB^{2}}\right\} ,\label{eq.lconstant}
\end{equation}
through which the proof is completed.
\end{proof}
This result in Lemma \ref{lem:L-smooth-joint} indicates that the
Lipschitz constant $L$ is larger than both $L_{H}$ and $L_{S}$.
Practical graphs are commonly large graphs; i.e., the number of nodes
$N$ would dominate the other constants in $L$, i.e., $c$, $\lambda$,
and $B$. Therefore, if we use the joint optimization approach, the
Lipschitz constant is larger than $L_{H}$ and $L_{S}$ by a large
margin.

After deriving the Lipschitz constant in joint optimization, we compare
its convergence rate with the alternating optimization approach. Denote
$\{\mathbf{H}^{(k)},\mathbf{S}^{(k)}\}_{k=0}^{K}$ as the sequence
generated  by the above joint optimization approach. Following the
results in \citet{bolte2014proximal,nikolova2017alternating}, the
convergence property of the joint optimization approach is also given
in Eq. \eqref{eq:alternating-rate} with $\rho=\frac{1}{\eta}-\frac{L}{2}$.
Theoretically, to guarantee the sufficient descent of the objective
at each step, $\rho$ can be chosen to be $\min\{\frac{1}{\eta_{1}}-\frac{L_{H}}{2},\frac{1}{\eta_{2}}-\frac{L_{S}}{2}\}$
in the alternating optimization approach. Due to the fact that $L>\max\left\{ L_{H},L_{S}\right\} $
according to Eq. \eqref{eq.lconstant}, the alternating optimization
approach is allowed to adopt a larger step size at each block than
the joint optimization approach, resulting in a faster convergence
behavior of the sequence. Motivated by this fact, we develop ASMP
based on the alternating procedure rather than the joint one so that
the resulting message passing structure contains fewer layers to achieve
the similar or even better numerical performance compared to the joint
one.

\section{Convergence Property of ASMP in Practice\label{sec:Convergence-Property-of}}

To evaluate the convergence property of ASMP with learned step sizes,
we conduct experiments on Cora, Citeseer, and Cora-ML datasets at
a 25\% perturbation rate under meta-attack. Specially, we train a
4-layer ASGNN model and observe that the learned step sizes do not
satisfy the condition in Theorem \ref{ASMP: Convergence Theorem}.
In the following, we investigate the empirical convergence property
of ASMP. Since we use a recurrent structure in ASGNN, i.e., the step
sizes used in different layers are the same, we are able to extend
the trained 4-layer ASGNN model to a deeper one. The values of the
objective function in Problem \eqref{eq:Structure Signal Denoising}
in different layers are showcased in Figure \ref{fig:convergence},
where the objective values are normalized by dividing the objective
value in the first layer. From Figure \ref{fig:convergence}, we can
find that ASMP with learned step sizes can monotonically decrease
the objective function value during the message passing process. Note
that although the monotonic decreasing property does not hold in 16-18
layers on the Cora-ML dataset, this may be mainly due to the fact
that the step sizes are learned only based on a 4-layer model. The
results indicate that although the learned step sizes do not satisfy
the condition in Theorem \ref{ASMP: Convergence Theorem}, they still
ensure the monotonic decrease of the objective function value in practice.

\begin{figure}
\begin{centering}
\includegraphics[width=0.5\columnwidth]{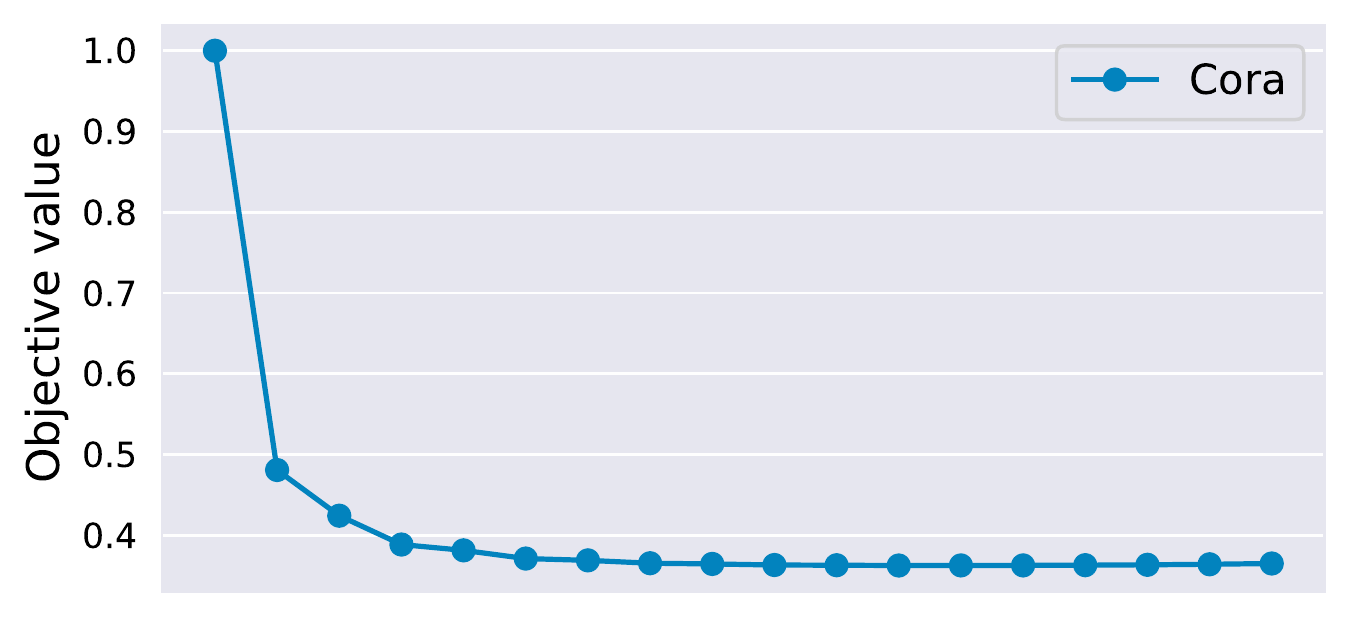} 
\par\end{centering}
\begin{centering}
\includegraphics[width=0.5\columnwidth]{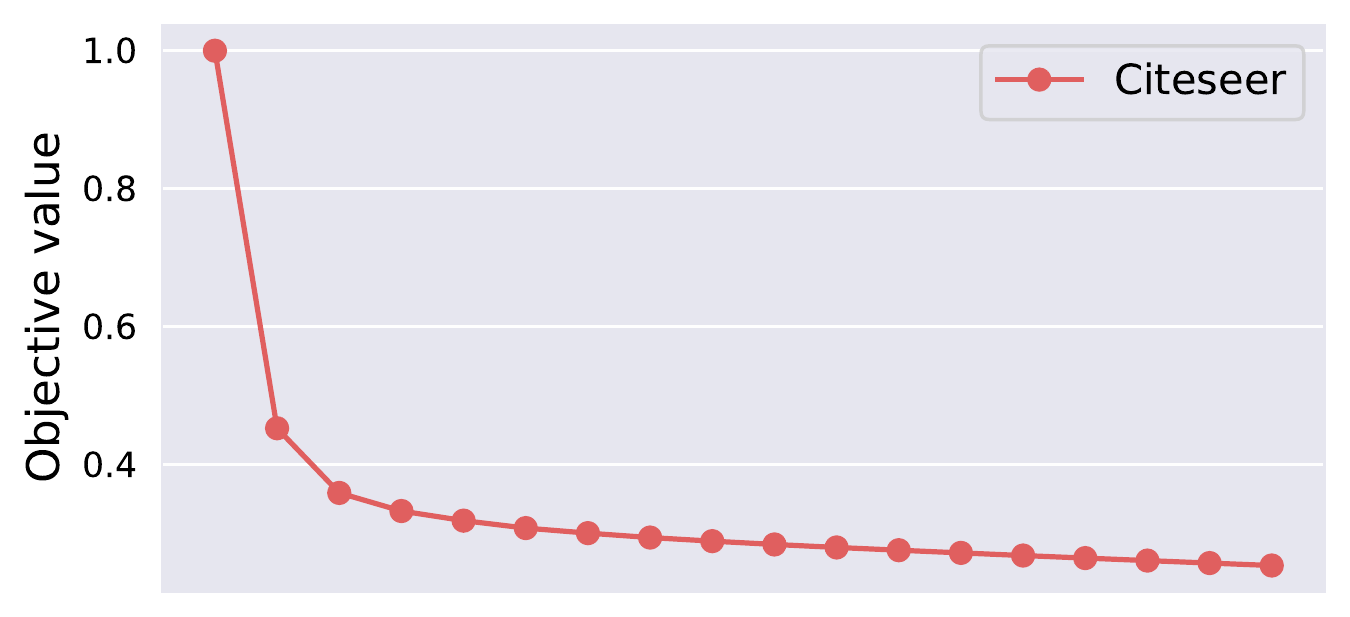} 
\par\end{centering}
\centering{}\includegraphics[width=0.5\columnwidth]{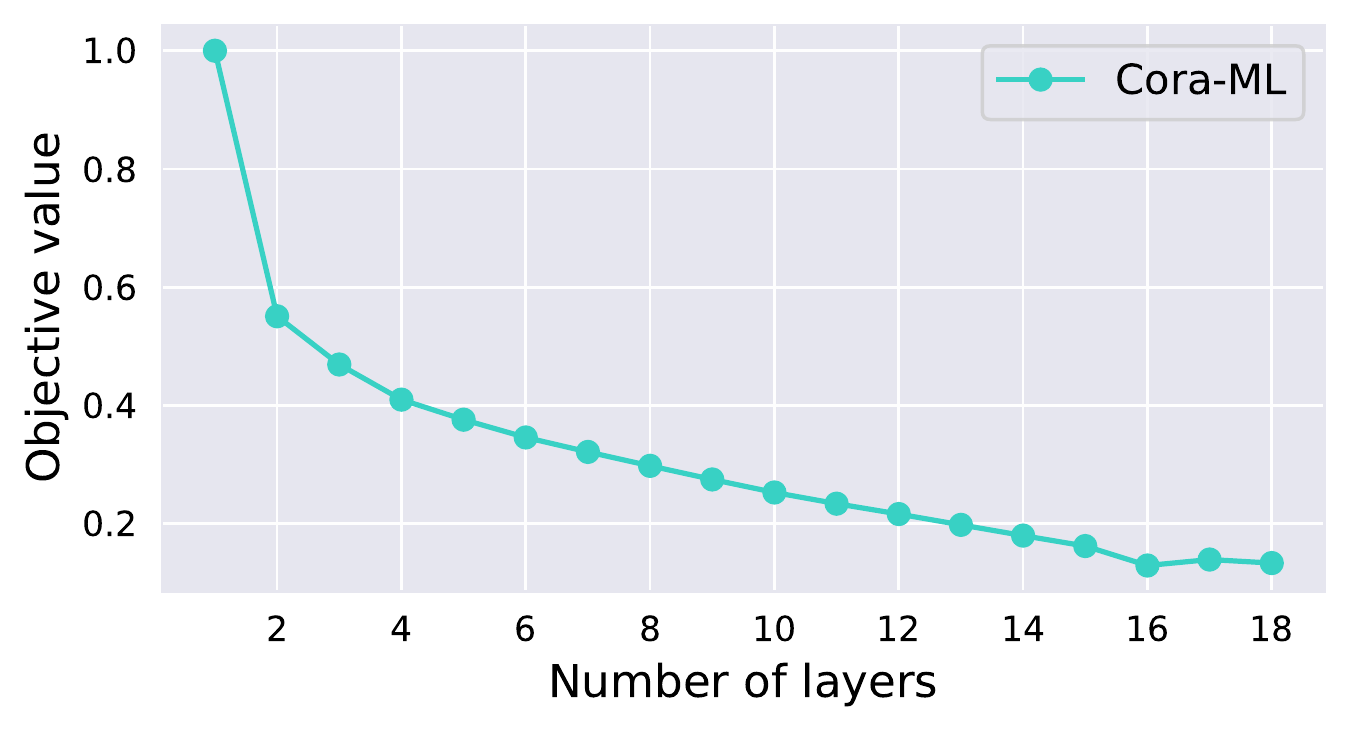}\caption{The objective value in Problem \eqref{eq:Structure Signal Denoising}
during ASMP.\label{fig:convergence}}
\end{figure}

\end{document}